\documentclass{article}

\usepackage{microtype}
\usepackage{graphicx}
\usepackage{enumitem}
\usepackage{subcaption}
\usepackage{listings}

\usepackage{xcolor}
\usepackage{booktabs} 

\usepackage{hyperref}
\usepackage{mdframed}\usepackage{colortbl} 
\usepackage{xcolor}   

\usepackage[accepted]{icml2026}

\usepackage{amsmath} 

\DeclareMathOperator{\Var}{Var}
\DeclareMathOperator{\Cov}{Cov}
\usepackage{amssymb}
\usepackage{mathtools}
\usepackage{amsthm}

\usepackage[capitalize,noabbrev]{cleveref}

\theoremstyle{plain}
\newtheorem{theorem}{Theorem}[section]

\newtheorem{lemma}[theorem]{Lemma}

\theoremstyle{definition}
\newtheorem{definition}[theorem]{Definition}
\newtheorem{assumption}[theorem]{Assumption}
\newtheorem{remark}[theorem]{Remark}

\usepackage[textsize=tiny]{todonotes}

\icmltitlerunning{LABO: LLM-Accelerated Bayesian Optimization through Broad Exploration and Selective Experimentation}

\begin{document}

\twocolumn[
  \icmltitle{LABO: LLM-Accelerated Bayesian Optimization through Broad Exploration and Selective Experimentation}



  \icmlsetsymbol{equal}{*}

  \begin{icmlauthorlist}
    \icmlauthor{Zhuo Chen}{equal,1,2}
    \icmlauthor{Xinzhe Yuan}{equal,3,1}
    \icmlauthor{Jianshu Zhang}{1,4}
    \icmlauthor{Jinzong Dong}{1,5}
    \icmlauthor{Ruichen Zhou}{6}
    \icmlauthor{Yingchun Niu}{6}
    \icmlauthor{Tianhang Zhou}{7}
    \icmlauthor{Yu Yang Fredrik Liu}{8}
    \icmlauthor{Yuqiang Li}{1}
    \icmlauthor{Nanyang Ye}{1,4}
    \icmlauthor{Qinying Gu}{1}
  \end{icmlauthorlist}

  \icmlaffiliation{1}{Shanghai Artificial Intelligence Laboratory, Shanghai, China}
  \icmlaffiliation{2}{School of Mechanical Engineering, Shanghai Jiao Tong University, Shanghai, China}
  \icmlaffiliation{3}{Institute for Advanced Study in Mathematics, Harbin Institute of Technology, Harbin, China}
  \icmlaffiliation{4}{School of Computer Science, Shanghai Jiao Tong University, Shanghai, China}
  \icmlaffiliation{5}{School of Automation, Central South University, Changsha, China}
  \icmlaffiliation{6}{College of New Energy and Materials, China University of Petroleum, Beijing, China}
  \icmlaffiliation{7}{College of Carbon Neutrality Future Technology, China University of Petroleum, Beijing, China}
  \icmlaffiliation{8}{DeepVerse PTE. LTD., Singapore}

  \icmlcorrespondingauthor{Nanyang Ye}{ynylincoln@sjtu.edu.cn}
  \icmlcorrespondingauthor{Qinying Gu}{guqinying@pjlab.org.cn}

  \icmlkeywords{Machine Learning, ICML}

  \vskip 0.3in
]



\printAffiliationsAndNotice{\icmlEqualContribution}

\begin{abstract}
The high cost and data scarcity in scientific exploration have motivated the use of large language models (LLMs) as knowledge-driven components in Bayesian optimization (BO). However, existing approaches typically embed LLMs directly into the sampling or surrogate modeling pipeline, without fully leveraging their significantly lower evaluation cost compared to real-world experiments. To address this limitation, we propose LLM-Accelerated Bayesian Optimization (LABO), a framework that combines LLM predictions with experimental observations within a single BO loop. LABO employs a gating criterion to dynamically balance the reliance on LLM predictions versus actual experiments. By leveraging inexpensive LLM evaluations to broadly explore the search space and reserving costly real experiments only for regions with high uncertainty, LABO achieves more sample-efficient optimization. We provide a theoretical analysis with a cumulative regret bound that formalizes this efficiency gain. Empirical results across diverse scientific tasks demonstrate that LABO consistently outperforms existing methods under identical experimental budgets. Our results suggest that LABO offers a practical and theoretically grounded approach for integrating LLMs into scientific discovery workflows.

\end{abstract}

\section{Introduction}

Formulation optimization is a central, yet challenging task in scientific domains such as drug discovery \cite{Zhang_NM_2025}, catalyst design \cite{Xin_NE_2022}, and molecular engineering \citep{Chen_2025_AECS}. Unlike conventional optimization tasks in machine learning, each evaluation in formulation design typically requires costly, time-consuming, and labor-intensive experiments \cite{Wang_2023_ACM}. These constraints make exhaustive exploration impractical under limited budgets. Moreover, the scarcity of experimental data limits the applicability of data-hungry methods. To address these challenges, Bayesian optimization (BO) has become a key strategy for navigating complex design spaces \cite{Shahriari_2015_PIEEE, Shields_2021_Nature, Guo_2023_Chimia, Yuan_2024}. By modeling the objective function with a surrogate and selecting candidates using acquisition functions, BO balances exploration and exploitation to efficiently identify promising solutions \cite{Shields_2021_Nature}.

Despite its widespread success, BO continues to face two fundamental limitations. First, data scarcity at the beginning of optimization impedes early-stage exploration, a challenge commonly known as the cold-start problem \cite{Guo_2023_Chimia}. Second, exploration in high-dimensional design spaces remains challenging for BO under limited experimental budgets~\cite{Shahriari_2015_PIEEE, Shields_2021_Nature}. Although properly configured vanilla BO can remain competitive in some high-dimensional settings~\cite{Hvarfner_2024}, expensive scientific formulation tasks still pose practical challenges due to sparse task-specific observations and limited real-fidelity budgets. These issues together constrain the sample efficiency of BO and limit its effectiveness in scientific discovery. Notably, most scientific problems are rich in domain knowledge, often embedded in textual resources such as literature, protocols, and expert guidelines. Human experts often leverage this information to guide the selection of sampling points \cite{ColaBO_2024}. Inspired by this, effectively integrating such knowledge as prior knowledge into the BO framework offers a promising pathway toward more efficient and informed optimization.

Indeed, recent efforts have sought to bridge this gap by integrating large language models (LLMs) into the BO pipeline to incorporate prior knowledge. Existing methods typically use LLMs to provide guidance at the initialization stage \cite{Liu_2024_ICLR, Agarwal_2025_ICLR} or to offer additional suggestions \cite{Yin_2024_ICCAD, Yang_2025_arXiv, Chang_2025_arXiv} during the optimization process. In these approaches, LLMs are primarily queried to evaluate individual candidates or to refine local decisions based on prior knowledge and accumulated observations. However, LLMs are capable of producing predictive assessments through in-context reasoning over prior knowledge and historical observations \cite{Yang_2023_ICLR}. Crucially, such predictions can be obtained at a cost that is orders of magnitude lower than real-world experiments. Yet current approaches make limited use of this cost-effective information, which restricts their ability to support broad, low-cost exploration across the search space. These insights motivate a central question:

\begin{mdframed}[backgroundcolor=white, linecolor=black, linewidth=0.5pt, roundcorner=5pt, shadow=true]
\itshape
Can LLMs be systematically incorporated into the optimization process to enable more effective exploration and improve the efficiency of BO?
\end{mdframed}

In this paper, we answer this question affirmatively by proposing LLM-Accelerated Bayesian Optimization (LABO). LABO instantiates a dual-fidelity BO workflow that integrates LLM-fidelity predictions with real-fidelity observations. In the warm-start phase, LABO leverages the LLM to propose high-potential candidates for early real-fidelity measurements, and to generate a broad set of LLM-fidelity predictions that cover the search space at negligible cost. During optimization, LABO fits a Kennedy–O’Hagan (KOH) joint Gaussian process (GP) surrogate \cite{kennedy2000predicting}, decomposing the real-fidelity objective into a scaled LLM-fidelity component and a discrepancy process. LABO then applies a discrepancy-dominance gating criterion to determine whether a candidate should trigger a real-fidelity experiment, based on how much predictive uncertainty is driven by the discrepancy term.

We support this design with both theory and experiments. Theoretically, we provide regret guarantees showing that LABO is sample-efficient when LLM guidance is informative and remains robust under noisy or misaligned LLM-fidelity signals (under the KOH modeling assumptions). Empirically, across a range of scientific optimization tasks, LABO finds better candidates under the same real-fidelity budget when LLM predictions are helpful, and matches vanilla BO when they are not.

Our contributions are threefold:
\begin{enumerate}
    \item We propose \textbf{LABO}, a principled BO framework that integrates LLM-derived predictions alongside real experiments.
    \item We establish theoretical guarantees showing that LABO improves sample efficiency when LLM guidance is informative, and bounds the regret even when the LLM-fidelity signal is misleading.
    \item We demonstrate the practical utility of LABO across diverse scientific optimization tasks, where it discovers high-performing formulations more efficiently under fixed real-fidelity experimental budgets.
\end{enumerate}

\paragraph{Conflict of Interest Disclosure.} The authors Z. C., X.Y., J.Z., J.D., N.Y., and Q.G. are employed by Shanghai Artificial Intelligence Laboratory, which leads the development of Intern S1 and Intern-S1-mini, which were among the LLMs evaluated in this paper.

\section{Related Work}

Currently, BO approaches that involve LLMs can be broadly grouped into two lines of thought:

\paragraph{LLM-in-the-loop BO.}
This family of methods leverages LLMs to improve components of the BO pipeline, such as the search space, the surrogate model, or the sampling/acquisition strategy~\cite{Liu_2024_ICLR, Agarwal_2025_ICLR, Yang_2025_arXiv, Suwandi_2025_NeurIPS, FIBO, yuan2026unleashing}. LLAMBO~\cite{Liu_2024_ICLR} leverages LLMs to generate high-potential initialization points and propose candidate solutions during optimization, alleviating cold-start issues. However, final decisions remain governed by conventional acquisition functions, with the LLM only indirectly involved in the optimization loop. Subsequent studies follow similar paradigms.
BOPRO~\cite{Agarwal_2025_ICLR} performs BO in a latent space induced by LLM embeddings, decoding optimized latent targets into executable solutions via prompting. ReasoningBO~\cite{Yang_2025_arXiv} improves proposal quality by incorporating chain-of-thought prompting. CAKE~\cite{Suwandi_2025_NeurIPS} takes a different approach by injecting LLM-derived priors into the kernel of a GP, allowing semantic knowledge to shape the surrogate model. FIBO~\cite{FIBO} leverages a pretrained generative model to
accelerate acquisition optimization and sampling within BO, reducing the
computational overhead per iteration.

\paragraph{LLM as an external signal.}
These methods use LLMs to estimate or predict the performance of candidate inputs, and then incorporate such predictions into the BO loop~\cite{Zhang_2025_arxiv, Ballew_2025, Han_2025_ChemBOMAS, ToSFiT}. ChemBOMAS~\cite{Han_2025_ChemBOMAS} further uses LLMs to run pseudo-experiments and injects the predicted outcomes as initial observations, accelerating early-stage exploration in chemical optimization. ToSFiT~\cite{ToSFiT} reformulates Thompson sampling in large discrete spaces as an online LLM fine-tuning process, using fine-tuned LLMs to guide exploration over discrete candidates.

However, while informative, prior methods either assign LLMs only partial responsibility within the loop or restrict them to a signal source mainly at initialization.
This design overlooks the LLM's capacity for broad and diversified exploration.

\section{Preliminary}
Before introducing our approach, we briefly review the standard formulation of BO, along with a commonly used modeling assumption for combining heterogeneous information sources. In particular, the KOH assumption provides a formulation for relating one signal to another.

\paragraph{Vanilla BO.} BO aims to maximize an unknown function $f:\mathcal{X}\rightarrow\mathbb{R}$ by sequentially querying inputs $x\in\mathcal{X}$, where evaluations are expensive or limited. At iteration $t$, a set of observations $D_t=\{(x_i,y_i)\}_{i=1}^{n_t}$ is used to fit a probabilistic surrogate model, commonly a GP with mean function $\mu$ and kernel $k$. This surrogate yields a predictive distribution over $f(x)$, including both a mean estimate and associated uncertainty. An acquisition function $a_t(x)$, such as expected improvement~\cite{EI} or upper confidence bound (UCB)~\cite{srinivas2009gaussian}, is computed from the surrogate and optimized to select the next query point. By updating the surrogate after each observation, BO efficiently balances exploration and exploitation to locate the global optimum with minimal evaluations.

\paragraph{KOH assumption~\cite{kennedy2000predicting}.}
The KOH assumption is a foundational modeling principle in multi-fidelity emulation. It posits that a real-fidelity output  $ f_R(\mathbf{x}) $  can be decomposed into a linear transformation of an LLM-fidelity approximation  $ f_L(\mathbf{x}) $  and an additive discrepancy term  $ \delta(\mathbf{x}) $ , i.e.,
\begin{equation}\label{equ:koh4.1}
    f_R(x) = \rho f_L(x) + \delta(x),
\end{equation}
where  $ \rho $  is a scalar representing the calibration or scaling factor between fidelities, and  $ \delta(\mathbf{x}) $  captures structural differences or biases not accounted for by the LLM-fidelity model. This formulation enables the integration of information across fidelity levels while explicitly accounting for model inadequacy.

\section{Method}
In scientific optimization, human experts often select promising experimental parameters based on domain knowledge and past experience. Such expert-driven suggestions, though not derived from physical measurements, can effectively guide the search process and, in essence, constitute a distinct source of fidelity information~\cite{Brochu_2010_arxiv, Xu_2024_ANIPS}. Inspired by this, we treat modern LLMs as a practical interface to unstructured scientific priors, including literature, protocols, and engineering heuristics. By combining these priors with the accumulated optimization history via in-context reasoning, an LLM can produce fast predictions of candidate designs. Importantly, these predictions are valuable not merely because they are inexpensive to obtain, but also because they can encode broad and cross-disciplinary scientific knowledge that provides useful intuition and directional guidance. This distinguishes LLM-fidelity information from conventional simulation-based models, which often require task-specific model construction, domain expertise, and substantial development effort for each new problem.

Building on this perspective, we propose LLM-Accelerated Bayesian Optimization, a framework designed to improve the sample efficiency of BO by systematically integrating LLM-derived predictions with real-fidelity experimental measurements. This dual-fidelity setting introduces two core challenges: (i) how to fuse heterogeneous LLM- and real-fidelity signals into a unified probabilistic surrogate, and (ii) how to decide, at each step, whether a candidate should be evaluated using LLM predictions or through an expensive real-fidelity experiment.

LABO addresses these challenges with two key strategies: On the modeling side, we employ discrepancy modeling based on the KOH assumption to correct systematic biases between LLM- and real-fidelity evaluations. On the decision side, we introduce a gating criterion to control the influence of LLM predictions, using uncertainty-based gating to dynamically determine when to rely on LLM estimates or real-fidelity evaluations. These strategies enable LABO to leverage LLM guidance while preserving robustness and ensuring efficient use of the real-fidelity budget. \cref{4.1} introduces the LLM-aware modeling approach, followed by the gating criterion in \cref{4.2}. The overall optimization procedure is provided in \cref{4.3}.

\subsection{LLM-Aware Modeling}
\label{4.1}

LABO treats LLMs as \textit{knowledge-driven} information sources. To integrate such cognitive priors with real experimental data, we adopt the KOH joint GP framework. Specifically, we model $f_L(x)$ and $\delta(x)$ as independent zero-mean GPs:
\begin{itemize}
    \item  $f_L(x) \sim \mathcal{GP}(0, k_L(x,x'))$ , trained on LLM predictions;
    \item  $\delta(x) \sim \mathcal{GP}(0, k_\delta(x,x'))$, trained on the discrepancy between real-fidelity observations and $\rho f_L(x)$.
\end{itemize}
Here, $k_L$ and $k_\delta$ are positive definite kernel functions. The scaling factor $\rho$ is estimated by ordinary least squares over paired LLM- and real-fidelity observations, providing a simple calibration of the global relationship between the two fidelities. With $\rho$ fixed, the resulting KOH covariance is well defined for any positive definite $k_L$ and $k_\delta$, while the discrepancy process $\delta(x)$ captures the remaining input-dependent bias that cannot be explained by the scaled LLM-fidelity function. This formulation yields a posterior over the real-fidelity function, with predictive mean and variance at any input $x$ given by:
$$
\mu_R(x) = \rho \mu_L(x) + \mu_\delta(x), \quad
\sigma_R^2(x) = \rho^2 \sigma_L^2(x) + \sigma_\delta^2(x).
$$
This modeling structure treats LLM predictions as a base estimator and uses the discrepancy process to correct for non-physical biases inherent in language-model-based reasoning. Crucially, because the LLM-aware model couples  $ f_L $  and  $ f_R $ , incorporating additional LLM queries (even without new experiments) updates  $ (\mu_L, \sigma_L^2) $  and thereby refines the real-fidelity posterior  $ (\mu_R, \sigma_R^2) $ . Intuitively, when LLM predictions align well with observed outcomes, the discrepancy remains small, allowing the surrogate to leverage LLM priors efficiently; when predictions become unreliable, the discrepancy variance dominates the total uncertainty, automatically triggering the need for real-world evaluation. This enables LABO to safely extract value from a \textit{cognitive}, knowledge-based LLM-fidelity source, rather than relying on traditional numerical simulation surrogates.

\begin{figure*}[ht]
  \begin{center}
    \includegraphics[width=\textwidth]{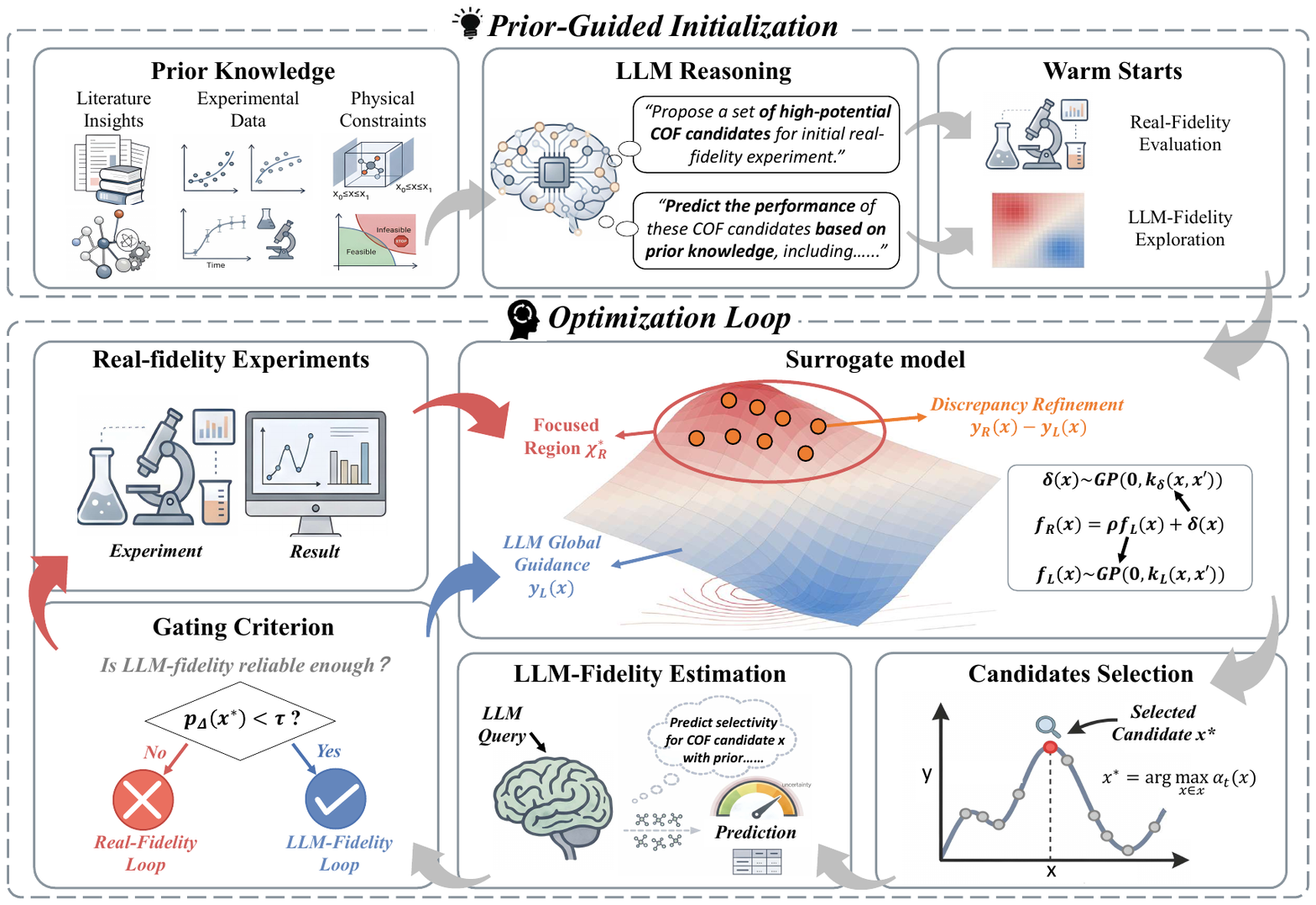}
    \caption{Overview of the LABO workflow. Initialization combines prior knowledge and LLM reasoning to propose high-potential points and perform LLM-fidelity exploration. Each iteration updates a joint surrogate, selects candidates, queries the LLM-fidelity predictions, and applies a gating criterion to decide whether to query the real-fidelity source. The process flow is illustrated with different color codes: \textbf{grey arrows} represent the general workflow, \textbf{blue} ones indicate the process when LLM-fidelity predictions are reliable, and \textbf{red} ones denote the process when LLM-fidelity predictions are unreliable, triggering the need for real-fidelity queries.}
    \label{fig.1}
  \end{center}
\end{figure*}
                                           
\subsection{Core Tool: Gating Criterion}
\label{4.2}
LABO uses a gating criterion to determine whether an LLM-fidelity prediction is 
sufficiently reliable to update the surrogate without requiring a real-fidelity 
query. To quantify the reliability of the LLM-fidelity prediction, we define the 
\textit{discrepancy dominance ratio}:
\[
p_\Delta(x) = \frac{\sigma_\delta^2(x)}{\rho^2 \sigma_L^2(x) + \sigma_\delta^2(x)},
\]
which measures the proportion of total predictive uncertainty in the real-fidelity surrogate attributable to the discrepancy process $\delta(x)$. A larger value of $p_\Delta(x)$ indicates that the uncertainty associated with the discrepancy dominates, suggesting that the relationship between LLM- and real-fidelity observations is difficult to model at $x$, and that the real-fidelity response cannot be reliably inferred from the LLM-fidelity prediction.

We adopt a threshold-based decision rule with a gating parameter $\tau \in (0,1)$. If $p_\Delta(x) \leq \tau$, the LLM-fidelity prediction is deemed reliable and used to update the joint surrogate via the LLM-fidelity GP of $f_L(x)$. Otherwise, LABO performs a real-fidelity experiment and updates the GPs of both $f_L(x)$ and $\delta(x)$. Equivalently, LABO triggers an experiment only when the surrogate uncertainty is dominated by the discrepancy, i.e., when LLM-fidelity information is insufficient to reduce uncertainty at that location. This rule allows LABO to exploit LLM evaluations where they are useful, while reserving experiments for regions where LLM-fidelity information is insufficient. As a result, LABO significantly reduces the number of real-fidelity queries while maintaining accurate surrogate modeling.

\subsection{Optimization Workflow}
\label{4.3}
The overall optimization workflow of LABO is illustrated in \cref{fig.1} and summarized in \cref{algo1}. The workflow begins with a warm-start phase to initialize the optimization process, followed by an iterative loop that selectively incorporates real evaluations based on the informativeness of LLM predictions.

\paragraph{Prior-guided initialization.}

LABO initializes the optimization by engaging the LLM to reason based on task-specific prior knowledge, including domain insights from literature, physical feasibility constraints, and semantic descriptions of target properties. This knowledge-driven approach serves two distinct purposes: to mitigate cold-start limitations and to establish a broad, informed view of the search space before the optimization loop begins. To address the first goal, the LLM synthesizes its priors to propose a small set of candidates $\mathcal{X}_R$ with high exploratory value under the given constraints. These inputs are evaluated through real-fidelity experiments to obtain ground-truth measurements $y_R(x)$, forming the initial dataset $\mathcal{D}_R = \left\{(x, y_R(x)) \mid x \in \mathcal{X}_R\right\}$. In parallel, a batch of space-covering inputs $\mathcal{X}_L$ is selected using Latin Hypercube Sampling (LHS). The LLM is queried at each $x \in \mathcal{X}_L$ to produce low-cost predictions $y_L(x)$, forming the initial LLM-fidelity dataset $\mathcal{D}_L = \left\{(x, y_L(x)) \mid x \in \mathcal{X}_L\right\}$. These predictions support the construction of $f_L$ and provide coarse-grained guidance on the global structure of the objective landscape. To enable discrepancy modeling in later stages, LLM predictions are also obtained at all $x \in \mathcal{X}_R$, ensuring that $\mathcal{X}_R \subset \mathcal{X}_L$. The complete input information provided to the LLM during this phase is included in Appendix.

\paragraph{Optimization loop.}

Following initialization, LABO advances the optimization through a dynamic loop that prioritizes the use of LLM-fidelity predictions while invoking real-fidelity evaluations only when necessary. At an iteration $t$, LABO first constructs a real-fidelity surrogate $f_R(x)$ that integrates low-cost predictions with experimentally derived corrections. The LLM-fidelity surrogate $f_L(x)$ is trained on the dataset $\mathcal{D}_L$ and provides a coarse but structured approximation of the objective function. To account for systematic bias, LABO estimates a scaling factor $\rho$ by minimizing the squared error between $y_R(x)$ and $y_L(x)$ over all inputs with paired LLM- and real-fidelity observations. It then constructs a discrepancy model $\delta(x)$ using the residual dataset $\{(x, y_R(x)-\rho y_L(x)) \mid (x,y_R(x)) \in \mathcal{D}_R\}$. The final surrogate is defined as $f_R(x) = \rho f_L(x) + \delta(x)$ and serves as the basis for candidate selection in the next step.

Based on the surrogate $f_R(x)$, LABO selects a batch of candidate points $\mathcal{X}_t$ by maximizing an acquisition function. To improve batch diversity, we construct the batch in a sequential greedy manner, where the surrogate is temporarily updated with the posterior mean after each selected candidate before selecting the next one. For each $x \in \mathcal{X}_t$, LABO queries the LLM to obtain a low-cost prediction $y_L(x)$, and adds $(x, y_L(x))$ to $\mathcal{D}_L$. The LLM query includes the task description, parameter definitions, optimization objective, historical observations, and the candidate points to be predicted, and the output is constrained to a structured JSON format with task-specific valid target ranges. LABO then evaluates the gating criterion $p_\Delta(x)$ to determine whether a real-fidelity evaluation is necessary. For candidates that fail the gating criterion, LABO performs a real experiment to obtain $y_R(x)$ and adds the result to $\mathcal{D}_R$.

Once all candidates $\mathcal{X}_t$ are processed, LABO updates its models. The LLM-fidelity surrogate $f_L$ is retrained on the updated $\mathcal{D}_L$, the discrepancy model $\delta(x)$ is updated using the new residuals, and the scaling factor $\rho$ is re-estimated. The composite surrogate $f_R(x)$ is then recomputed accordingly by equation~\ref{equ:koh4.1}. The loop continues until the real-fidelity budget is exhausted or a stopping condition is met.

\begin{algorithm}[tb]
  \caption{LLM-Accelerated Bayesian Optimization}
  \label{algo1}
  \begin{algorithmic}[1]
    \STATE {\bfseries Input:} real-fidelity budget $T$, gating threshold $\tau$, prior knowlegde $\mathcal{P}$, LLM evaluator $g_{LLM}(\mathcal{P},\mathcal{D})$ real world experiment $g_R(\mathcal{X}_R)$ LLM-aware kernel $k_L$ and discrepancy kernel $k_\delta$.
    \STATE {\bfseries Output:} best $(x^*,y_R^*)$ observed in $\mathcal{D}_R$
    \STATE Query initialization experiments set $\mathcal{X}_R\leftarrow g_{LLM}(\mathcal{P})$
    \STATE Real-world experiments  $y_R(x)\leftarrow g_{R}(\mathcal{X}_R)$%
    \STATE Generate LHS samples $\mathcal{X}_{LHS}$ and set $\mathcal{X}_L \leftarrow \mathcal{X}_R \cup \mathcal{X}_{LHS}$, $\mathcal{D}_R = \{(x, y_R(x)) \mid x \in \mathcal{X}_R\}$
    \STATE Query LLM to obtain $\mathcal{D}_L\leftarrow g_{LLM}(\mathcal{P}, \mathcal{D}_R)$
    \WHILE{the number of real-fidelity evaluations $< T$}
      \STATE Model $f_L \sim \mathcal{GP}(0, k_L)$ on $\mathcal{D}_L$
      \STATE Compute $\rho \leftarrow \arg\min_\rho \sum_{(x,y_R(x))\in \mathcal{D}_R} (y_R(x)-\rho y_L(x))^2$
      \STATE Model $\delta \sim \mathcal{GP}(0,k_\delta)$ on 
$\mathcal{D}_\delta=\{(x,y_R(x)-\rho y_L(x)) \mid (x,y_R(x))\in \mathcal{D}_R\}$
      \STATE Compute $f_R(x) \leftarrow \rho f_L(x) + \delta(x)$
      \STATE Compute $\mathcal{X}_t\leftarrow \arg\max_x UCB(x, f_R(x))$ 
      \FOR{each $x \in \mathcal{X}_t$}
        \STATE Query LLM  $\mathcal{D}_L \leftarrow \mathcal{D}_L \cup \{(x, y_L(x))\}$
        \IF{$p_\Delta(x) > \tau$}
          \STATE Real-fidelity evaluation to update $\mathcal{D}_R \leftarrow \mathcal{D}_R \cup \{(x, y_R(x))\}$
        \ENDIF
      \ENDFOR
    \ENDWHILE

  \end{algorithmic}
\end{algorithm}

\section{Theoretical Guarantee}
We establish a theoretical guarantee (\cref{appendix:A}) for LABO by analyzing its cumulative regret in the KOH setting. To decide which fidelity to query at each step, LABO employs a gating criterion based on the ratio between the discrepancy variance and the total predictive variance of the real-fidelity model. This criterion identifies whether the LLM-fidelity estimate is sufficiently informative, and adaptively partitions the full design domain $\mathcal{X}$ into regions where LLM-fidelity queries are effective and regions requiring real-fidelity evaluation. We prove that this partition stabilizes after a finite number of steps, giving rise to a limiting real-fidelity region $\mathcal{X}_R^\ast \subseteq \mathcal{X}$ where real-fidelity queries are consistently selected. Formally, as defined in Definition~A.6, the limiting real-fidelity region is
\[
\mathcal{X}_R^\ast := \liminf_{t\to\infty}\mathcal{X}_R^{(t)}.
\]

The cumulative regret $R_T = \sum_{t=1}^T (f^\ast - f_R(x_t))$ is then decomposed into three components: regret from real-fidelity queries within $\mathcal{X}_R^\ast$, regret from a sublinear number of real-fidelity queries outside this region during the transient phase, and regret from LLM-fidelity evaluations. Each term is bounded using standard GP-UCB arguments and the mutual information associated with the induced real-fidelity model. The resulting bound provides an asymptotic cumulative-regret guarantee, implying sublinear regret under the stated assumptions and standard kernel-specific information-gain rates.

Importantly, our analysis places no structural assumptions on the LLM-fidelity oracle, allowing it to be inaccurate or inconsistent across the domain. This generality makes the result applicable to cases where the external source is an LLM with spatially varying uncertainty. By concentrating real-fidelity queries within a progressively restricted subset of the domain, LABO achieves improved sample efficiency and reduced regret.
The proof of Theorem~\ref{Theorem 1} is shown in Appendix~\ref{appendix:A}

\begin{theorem}
  \label{Theorem 1}
  Suppose $f_R \sim \mathcal{GP}(0, k_R)$ with the composite kernel defined above. Let $T_R^\ast$ be the number of real-fidelity queries within the limiting region $\mathcal{X}_R^\ast$, and let $T_L$ be the number of LLM-fidelity queries. Let $\beta_T = \mathcal{O}(\log T)$ be the standard GP-UCB confidence parameter, and let $\Psi_T(A)$ denote the maximum mutual information between $f_R$ and any set of $T$ noisy observations in region $A$. Then, with probability at least $1 - \delta$, the cumulative regret of LABO satisfies
  \begin{align*}
  R_T \leq\ & C_1 \sqrt{T_R^\ast \beta_T \Psi_T(\mathcal{X}_R^\ast)} + C_2 \sqrt{T^\alpha \beta_T \Psi_T(\mathcal{X})} \\ & + C_3 \sqrt{T_L \beta_T \Psi_T(\mathcal{X})},
  \end{align*} where $C_1, C_2, C_3 > 0$ are constants, and $\alpha < 1$ reflects the sublinear number of transient real-fidelity queries outside $\mathcal{X}_R^\ast$.
\end{theorem}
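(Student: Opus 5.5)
The plan is to run the standard GP-UCB argument of \cite{srinivas2009gaussian} on the composite KOH process $f_R=\rho f_L+\delta$ with kernel $k_R=\rho^2 k_L+k_\delta$. First we establish a uniform confidence bound. Discretize $\mathcal{X}$ (or use a finite domain) and take $\beta_t=2\log(|\mathcal{X}|t^2\pi^2/(6\delta))=\mathcal{O}(\log T)$. A Gaussian tail bound plus a union bound then give, with probability at least $1-\delta$, that for all $t$ and $x$
\[
|f_R(x)-\mu_{R,t-1}(x)|\le \beta_t^{1/2}\sigma_{R,t-1}(x).
\]
Here $(\mu_{R,t-1},\sigma_{R,t-1})$ is the joint KOH posterior, which conditions on both LLM-fidelity and real-fidelity data. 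Since the UCB rule picks $x_t$ maximizing $\mu_{R,t-1}+\beta_t^{1/2}\sigma_{R,t-1}$, each instantaneous regret satisfies $r_t\le 2\beta_t^{1/2}\sigma_{R,t-1}(x_t)$. This holds regardless of which fidelity the gate then queries. We treat $\rho$ as fixed, and we place no assumption on the LLM oracle beyond its being a noisy observation of $f_L$ within the joint GP model.

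Next we partition the time indices $\{1,\dots,T\}$ into three sets:
\begin{itemize}
\item $\mathcal{T}_R^\ast$: real-fidelity queries with $x_t\in\mathcal{X}_R^\ast$;
\item $\mathcal{T}_{\mathrm{tr}}$: real-fidelity queries with $x_t\notin\mathcal{X}_R^\ast$;
\item $\mathcal{T}_L$: steps where the gate passes, so only the LLM is queried.
\end{itemize}
On each set we apply Cauchy--Schwarz:
\[
\sum_{t\in\mathcal{T}} r_t\le \sqrt{4|\mathcal{T}|\beta_T\sum_{t\in\mathcal{T}}\sigma_{R,t-1}^2(x_t)}.
\]
The variance sum is then bounded by an information-gain term. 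Posterior variances only shrink when extra observations (of either fidelity) are added, so we may compare $\sigma_{R,t-1}$ with the posterior that conditions only on the observations within that subsequence. The usual inequality $\sigma^2\le C\log(1+\sigma^2/\lambda)$ then yields $\sum_{t\in\mathcal{T}}\sigma_{R,t-1}^2(x_t)\le C'\,\Psi_{|\mathcal{T}|}(A)$, where $A$ is the region containing the queried points.

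For $\mathcal{T}_R^\ast$ this gives $C_1\sqrt{T_R^\ast\beta_T\Psi_T(\mathcal{X}_R^\ast)}$. For $\mathcal{T}_L$ the points may lie anywhere, so the region is $\mathcal{X}$. An LLM observation is a noisy observation of $f_L$, and its information about $f_R$ is at most that of a direct observation of $f_R$ up to a constant. To see this, note that when the gate passes, $p_\Delta\le\tau$, so $\rho^2\sigma_L^2\ge(1-\tau)\sigma_R^2$. Hence the variance reduction from observing $f_L$ is a $(1-\tau)$-fraction of what an $f_R$ observation would give, and this constant is absorbed into $C_3$. This yields $C_3\sqrt{T_L\beta_T\Psi_T(\mathcal{X})}$.

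The transient term is the main obstacle. We must show $|\mathcal{T}_{\mathrm{tr}}|=\mathcal{O}(T^\alpha)$ with $\alpha<1$. My approach uses Definition~A.6 ($\mathcal{X}_R^\ast=\liminf_t\mathcal{X}_R^{(t)}$) together with a rate statement. A real query at $x\notin\mathcal{X}_R^\ast$ requires $p_\Delta(x)>\tau$ at a point that will eventually leave the real-fidelity region. At such points, real observations of $\delta$ nearby shrink $\sigma_\delta^2$ at a rate governed by the information gain. The bound $\sum\sigma_\delta^2(x_t)\le C\gamma_T$ combined with $\sigma_\delta^2>\tau\sigma_R^2$ should then give a count of order $\gamma_T$ times a polynomial factor, hence $T^\alpha$ for kernels with sublinear $\gamma_T$. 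If this cannot be made rigorous, I would adopt the sublinearity of transient queries as an explicit assumption, consistent with the theorem's phrasing that $\alpha$ ``reflects'' it. Applying Cauchy--Schwarz as above with $|\mathcal{T}_{\mathrm{tr}}|\le T^\alpha$ and region $\mathcal{X}$ gives the $C_2$ term. Summing the three contributions completes the bound.
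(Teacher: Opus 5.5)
Your bound on $|\mathcal{T}_{\mathrm{tr}}|$ does not go through. The gate is scale-invariant: $p_\Delta(x_t)>\tau$ says only that $\sigma_\delta^2$ is a large \emph{fraction} of $\sigma_R^2$, not that either variance is large. Combining $\sum_{t\in\mathcal{T}_{\mathrm{tr}}}\sigma_{\delta,t-1}^2(x_t)\le C\gamma_T$ with $\sigma_\delta^2>\tau\sigma_R^2$ therefore bounds only $\sum_{t\in\mathcal{T}_{\mathrm{tr}}}\sigma_{R,t-1}^2(x_t)$, which your Cauchy--Schwarz step already controls. It gives no bound on the number of such rounds, because nothing prevents transient pulls from all occurring at points of tiny variance. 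Your premise that a transient point \emph{will eventually leave} the real-fidelity region is also not implied by $\mathcal{X}_R^\star=\liminf_t\mathcal{X}_R^{(t)}$. The condition $x\notin\mathcal{X}_R^\star$ only means $x\notin\mathcal{X}_R^{(t)}$ for infinitely many $t$, so $p_\Delta^{(t)}(x)$ could keep crossing the threshold and trigger real pulls at $x$ indefinitely. As written, the $C_2$ term rests entirely on your fallback assumption.

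The missing ingredient is the paper's gate-stabilization lemma (Lemma~\ref{lem:gate-stability}). It assumes a finite domain, fixed $\rho$, and that $p_\Delta^{(t)}(x)$ converges to a limit different from the thresholds; the proof notes that $\Var_t[f_L(x)]$ and $\Var_t[\delta(x)]$ are non-increasing. Under these hypotheses, each $x$ has a finite $T_x$ after which its gate decision is frozen and agrees with membership in $\mathcal{X}_R^\star$. With $T_0:=\max_x T_x<\infty$, every real pull at $t\ge T_0$ has $x_t\in\mathcal{X}_R^{(t)}=\mathcal{X}_R^\star$, so transient pulls happen only before $T_0$. Their number is a $T$-independent (random) constant, which yields the $C_2\sqrt{T^\alpha\beta_T\Psi_T(\mathcal{X})}$ term for any $\alpha\ge 0$.

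The other parts of your plan differ from the paper but are fine. Your Bayesian union-bound $\beta_t=\mathcal{O}(\log T)$ matches the statement better than the paper's RKHS choice $\beta_t=C(\log(1/\delta)+\gamma_{t-1})$. Your monotonicity comparison properly justifies applying the width-sum bound to subsequences. Using the gate for the LLM rounds addresses something the paper's Lemma~\ref{lem:regret-bad-and-LF}(iii) passes over, but state it as $\sigma_{R,t-1}^2(x_t)\le\rho^2\sigma_{L,t-1}^2(x_t)/(1-\tau)$. Then sum $\sigma_{L,t-1}^2$ via the information gain of the $f_L$ GP, which is at most a constant times $\Psi_T(\mathcal{X})$ since $\rho^2K_L\preceq K_R$. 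Your claim that an LLM query gives a $(1-\tau)$-fraction of a real query's variance reduction is false as stated, because that ratio also depends on $\rho$ and the two noise levels.
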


\begin{figure*}[ht]
  \begin{center}
    \includegraphics[width=\textwidth]{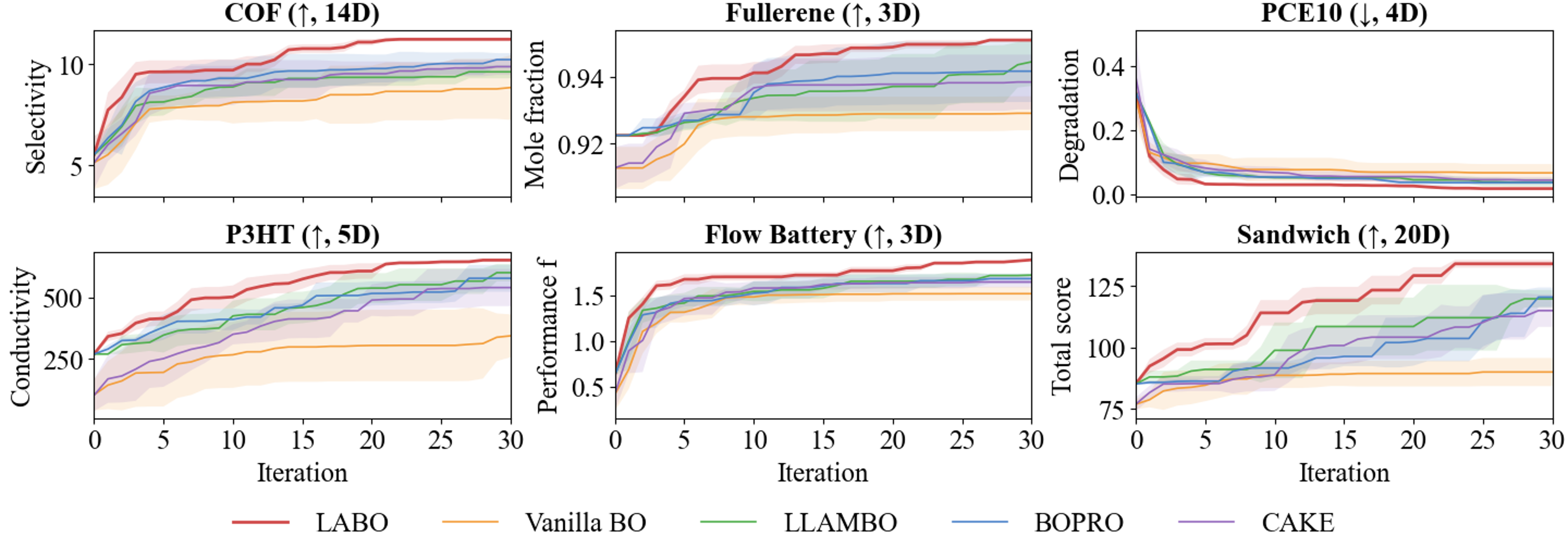}
    \caption{Optimization performance across six scientific tasks. Shaded regions indicate standard deviation over five independent seeds.}
    \label{fig.2}
  \end{center}
\end{figure*}

\begin{figure*}[ht]
    \centering
    \begin{subfigure}{0.32\textwidth}
        \caption*{A}
        \includegraphics[width=\linewidth]{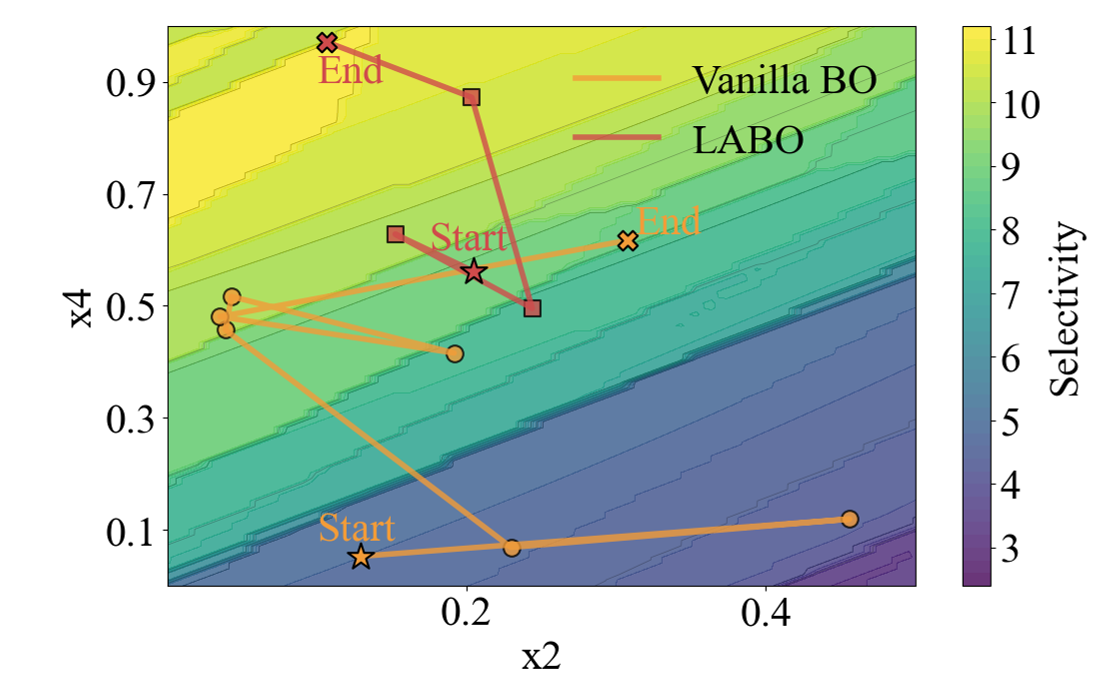}
    \end{subfigure}
    \hfill
    \begin{subfigure}{0.32\textwidth}
        \caption*{B}
        \includegraphics[width=\linewidth]{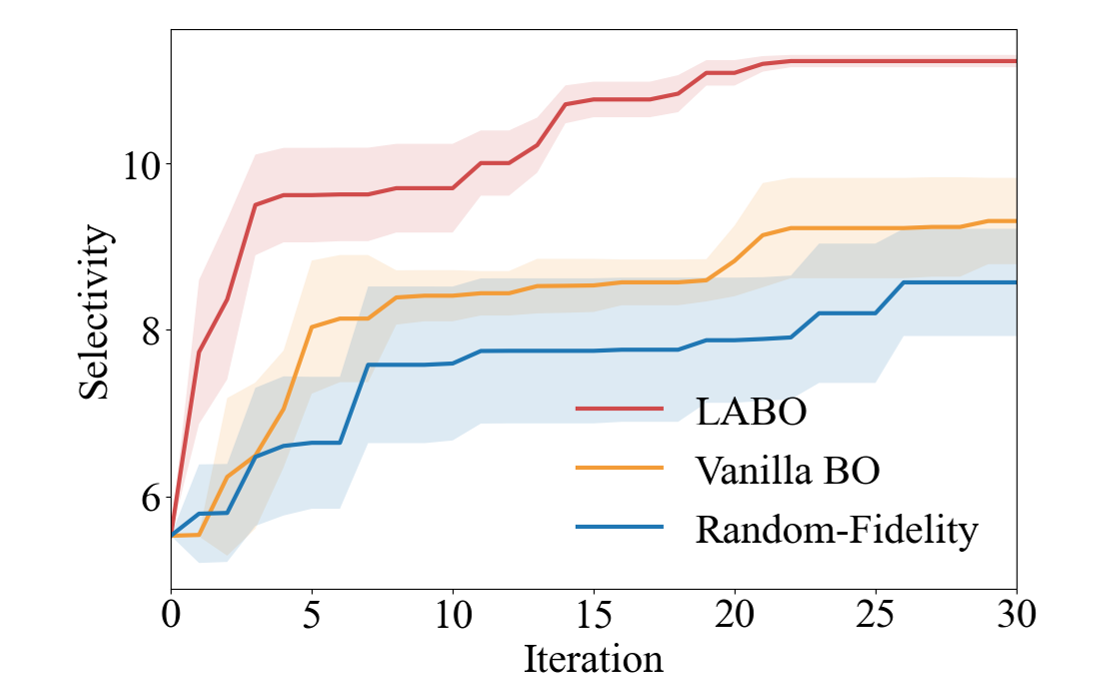}
    \end{subfigure}
    \hfill
    \begin{subfigure}{0.32\textwidth}
        \caption*{C}
        \includegraphics[width=\linewidth]{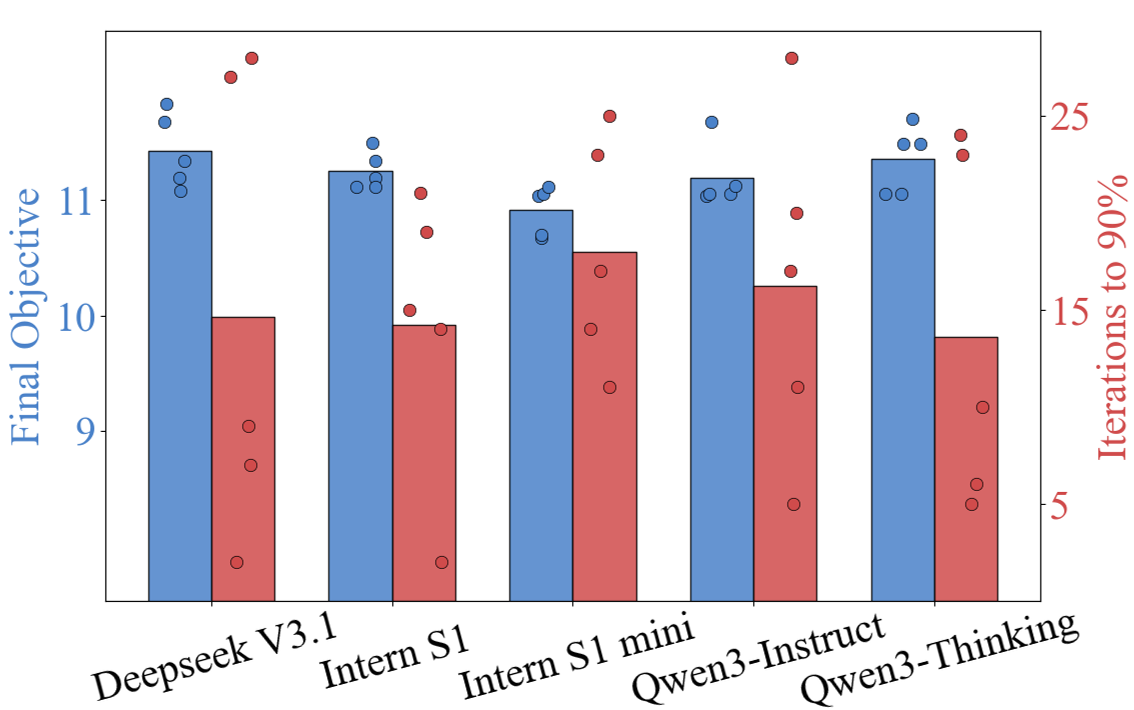}
    \end{subfigure}
    \caption{Ablation results on the COF task. \textbf{(A)} Optimization trajectories of LABO and Vanilla BO. \textbf{(B)} Performance comparison under the same initialization candidates across LABO, Vanilla BO, and random-fidelity prediction. \textbf{(C)} Performance of LABO with different LLMs. The blue bars represent the final best objective value, while the red bars show the number of iterations required to reach 90\% of the final optimum. Dots indicate results from individual test runs.}
    \label{fig.4}
\end{figure*}

\begin{remark}
  Standard GP-UCB explores the entire domain using a single real-fidelity model, leading to regret that scales with the information capacity $\Psi_T(\mathcal{X})$ of the full search space. In contrast, the gating criterion in LABO rapidly filters out regions where LLM-fidelity signals are uninformative and where real-fidelity evaluations offer no meaningful uncertainty reduction. This yields a substantially reduced effective region $\mathcal{X}_R^\ast$. Consequently, the dominant regret term is governed by $\Psi_T(\mathcal{X}_R^\ast) \ll \Psi_T(\mathcal{X})$, yielding significantly more focused exploration and improved sample efficiency. Furthermore, since LLM-fidelity queries often incur negligible cost compared to real scientific experiments in practice, their contribution to the overall regret, $C_3 \sqrt{T_L \beta_T \Psi_T(\mathcal{X})}$, can be ignored.
\end{remark}

\section{Experiments}
We evaluate the performance of LABO on a suite of continuous formula optimization tasks drawn from multiple scientific domains and of varying dimensionalities. These tasks are designed to support a systematic comparison with vanilla BO and recent LLM-guided baselines under a fixed real-fidelity evaluation budget. In addition to the final best objective value, we also consider how quickly each method reaches high-quality solutions, since each real-fidelity evaluation may correspond to a costly scientific experiment. Faster convergence therefore directly translates into fewer experimental trials and greater practical utility in scientific discovery. Appendix~\ref{appendix:C} provides additional evaluations on AutoML benchmarks~\cite{HPOBench}, high-dimensional scientific optimization~\cite{superconductor}, cost-benefit analysis, and LLM--real-fidelity alignment.

\subsection{Experimental settings}
\textbf{Baselines.} We consider the following methods:

\textbf{Vanilla BO} uses a GP surrogate and a standard q-UCB acquisition function, without any external knowledge.

\textbf{LLAMBO}~\cite{Liu_2024_ICLR} integrates LLMs by prompting them to suggest initial candidates and generate new proposals during optimization.

\textbf{BOPRO}~\cite{Agarwal_2025_ICLR} iteratively updates the LLM prompt using selected histories to sample new candidates that explore or exploit different regions of the search space.

\textbf{CAKE}~\cite{Suwandi_2025_NeurIPS} uses LLMs to adaptively generate and refine GP kernels based on observed data during the optimization process.

\textbf{Implementation details.} All methods use a GP surrogate with an RBF kernel and the q-UCB acquisition function. Each iteration selects two candidates for evaluation. Optimization begins with three initial points per task. For LABO, a batch of 50 LLM-fidelity evaluations is performed during the warm-up phase before real-fidelity optimization begins. The gating criterion is set to $\tau = 0.75$, and we use the same fixed threshold across all main experiments rather than tuning it separately for each optimization problem or LLM backbone. Vanilla BO and CAKE are initialized with random samples, while the remaining methods use the same LLM-suggested starting points. All results are averaged over five random seeds, with mean and standard deviation reported. For all LLM-assisted methods, we use \textbf{Intern S1 241B}~\cite{Intern_S1}, a large language model pretrained on scientific knowledge, which makes it well aligned with the prediction tasks considered in this work.

\begin{table*}[ht]
  \caption{Ablation results on different gating criteria $\tau$ on COF (14D) and Fullerene (3D). We report the final objective, number of iterations to reach 90\% of the best value, and the ratio of LLM- to real-fidelity queries (mean $\pm$ std over 5 runs).}
  \label{tab1}
  \begin{center}
    \begin{small}
      \begin{sc}
        \begin{tabular}{c|ccc|ccc}
          \toprule
          $\tau$ & \multicolumn{3}{c|}{\textbf{COF}} & \multicolumn{3}{c}{\textbf{Fullerene}} \\
          \cmidrule(lr){2-4} \cmidrule(lr){5-7}
                & Final Obj & Iters to 90\% & L/R & Final Obj & Iters to 90\% & L/R \\
          \midrule
          0.60 & 10.778$\pm$0.276 & 24.60$\pm$2.51 & 1.52$\pm$0.29 & 0.9490$\pm$0.0019 & 22.40$\pm$6.35 & 1.54$\pm$0.23 \\
          0.65 & 10.934$\pm$0.165 & 21.60$\pm$5.81 & 1.91$\pm$0.94 & 0.9495$\pm$0.0015 & 22.20$\pm$2.86 & 1.85$\pm$0.52 \\
          0.70 & 11.070$\pm$0.224 & 15.83$\pm$7.78 & 2.00$\pm$0.34 & 0.9511$\pm$0.0014 & 17.00$\pm$9.17 & 2.00$\pm$0.73 \\
          0.75 & \textbf{11.228$\pm$0.162} & 14.17$\pm$6.62 & 2.68$\pm$1.13 & \textbf{0.9512$\pm$0.0012} & 14.60$\pm$5.68 & 3.87$\pm$1.62 \\
          0.80 & 11.134$\pm$0.156 & 14.80$\pm$5.26 & 3.44$\pm$1.22 & 0.9506$\pm$0.0020 & \textbf{14.50$\pm$8.58} & 5.69$\pm$2.44 \\
          0.85 & 11.171$\pm$0.121 & \textbf{12.60$\pm$7.77} & \textbf{5.26$\pm$2.11} & 0.9499$\pm$0.0010 & 15.00$\pm$2.35 & \textbf{14.60$\pm$3.70} \\
          \bottomrule
        \end{tabular}
      \end{sc}
    \end{small}
  \end{center}
\end{table*}

\textbf{Tasks.} We evaluate all methods on six formula optimization tasks spanning chemistry, materials science, energy systems, and nutrition. The dataset dimensions are indicated in parentheses:

\textbf{COF (14D)}~\cite{COF}: Maximize xenon/krypton adsorption selectivity by optimizing the pore structure and atomic composition of covalent organic frameworks.

\textbf{Fullerene (3D)}~\cite{Fullerene}: Maximize the yield of C60 adducts in a cascaded reaction network by tuning time, temperature, and reagent concentration.

\textbf{PCE10 (4D)}~\cite{PCE10}: Minimize photodegradation in quaternary organic photovoltaic blends by adjusting the weight fractions of four polymer components.

\textbf{P3HT (5D)}~\cite{P3HT}: Maximize electrical conductivity in drop-cast P3HT/carbon nanotube composites by optimizing five continuous formulation variables.

\textbf{Flow Battery (3D)}~\cite{FeCr}: Optimize a composite performance metric of a redox flow battery electrolyte by tuning the concentrations of ions.

\textbf{Sandwich (20D)}~\cite{Sandwich}: Maximize a dietary health score over twenty ingredient quantities.

\subsection{Main Results}
\begin{figure}[h]
  \begin{center}
    \includegraphics[width=\columnwidth]{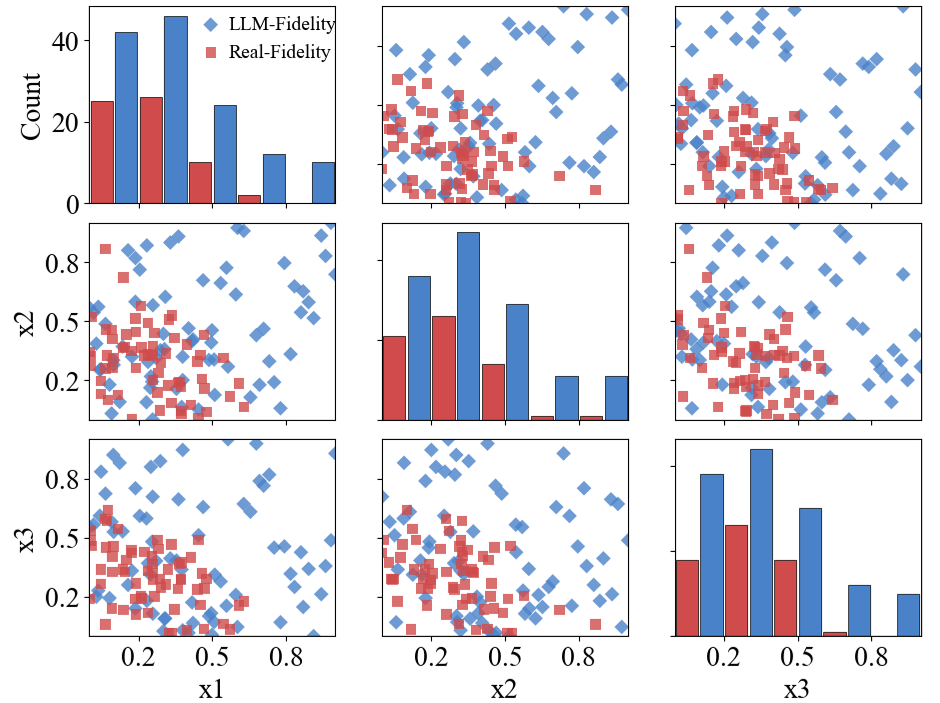}
    \caption{Distribution of LLM- and real-fidelity samples on the COF task. The first three normalized input dimensions are shown.}
    \label{fig.3}
  \end{center}
\end{figure}
LABO consistently outperforms all baselines across the six optimization tasks (\cref{fig.2}). On relatively high-dimensional problems (COF  and Sandwich), it achieves better final objective values within a limited evaluation budget. While LLAMBO and BOPRO benefit from LLM suggested initialization with strong early progress, they tend to get stuck in local optima. In contrast, LABO leverages globally informed LLM-fidelity guidance to escape suboptimal regions and continue improving. On low-dimensional tasks like PCE10  and Flow Battery, where vanilla BO converges quickly, LABO still delivers better overall performance with more stable convergence. For more complex and noisy objectives such as Fullerene and P3HT, LABO again achieves the best performance. While CAKE demonstrates consistent exploratory behavior on these tasks, its performance tends to fluctuate, possibly due to instability introduced by adaptive kernel updates during optimization. In contrast, LABO exhibits significantly lower variance across random seeds, reflecting more consistent search behavior enabled by LLM-fidelity predictions grounded in prior knowledge and historical data.

Further comparison of sampling trajectories (\cref{fig.4}A) reveals that vanilla BO scatters its evaluations throughout the search process without a clear convergence pattern, whereas LABO rapidly focuses its samples on high-potential regions. This advantage stems from LABO's unique sampling mechanism. Taking the COF task as an example (\cref{fig.3}), LLM-fidelity queries broadly cover the input space, providing inexpensive, coarse global guidance, while real-fidelity evaluations are concentrated in a few sub-regions of high uncertainty, precisely as enforced by the gating criterion: expensive real-fidelity evaluations are triggered only when LLM predictions are deemed unreliable. As predicted by our theoretical analysis, this strategy drastically reduces the effective exploration region $\mathcal{X}_R^*$, leading to significantly improved sample efficiency.

\subsection{Ablation Study}
\textbf{LLM initialization.} To isolate the effect of LLM-suggested real-fidelity initialization, we initialize vanilla BO with the same set of candidates used to warm-start LABO. The LLM-fidelity exploratory phase remains unchanged. As shown in \cref{fig.4}B, LABO still significantly outperforms vanilla BO under identical initialization. This indicates that its performance gain is not solely attributable to the initial real-fidelity candidates proposed by the LLM.

\textbf{Random-fidelity.} To assess the value of the LLM-fidelity signals, we replace them with uniformly sampled random values within the same output range. This leads to slower convergence and worse final performance (\cref{fig.4}B). This result confirms that LABO’s advantage primarily arises from the informative predictions provided by the LLM, grounded in prior knowledge and in-context reasoning.

\textbf{Different LLMs.} We replace Intern S1 with a set of LLMs of varying sizes and capabilities, including Intern-S1-mini (7B)~\cite{Intern_S1}, Qwen3-235B-Instruct, Qwen3-235B-Thinking~\cite{Qwen3}, and Deepseek V3.1 (685B)~\cite{Deepseek}. The results (\cref{fig.4}C) show that the final optimization performance generally improves as model size increases. Qwen3-Thinking outperforms its Instruct counterpart in both convergence and final objective value, suggesting that models with enhanced reasoning capabilities can provide more reliable LLM-fidelity predictions. While stronger LLMs can further boost optimization performance, the overall differences across these models remain relatively modest. LABO maintains robust and effective performance across different types and capacities of LLMs.

\textbf{Effect of gating criterion.} We focus on varying the gating criterion $\tau$ from 0.60 to 0.85 because this range captures values where LABO strikes a balance between relying on LLM-fidelity predictions and real-fidelity evaluations. At lower values, LABO relies too heavily on real-fidelity evaluations, limiting the exploration benefits of LLM predictions. At higher values, LABO places too much trust in LLM outputs, which can mislead the search when the LLM predictions are inaccurate. As shown in \cref{tab1}, $\tau = 0.75$ yields the best performance on both tasks. As $\tau$ increases, LABO performs more LLM evaluations and develops a broader understanding of the search space, which can accelerate early-stage convergence. We also observe that the ratio of LLM- to real-fidelity queries is notably higher on the simpler Fullerene task, indicating greater reliance on LLM predictions. In contrast, high-dimensional tasks like COF require more real-fidelity queries to compensate for LLM uncertainty. These results suggest that LABO adjusts its fidelity allocation according to task complexity.


\section{Conclusion}
In conclusion, LABO effectively integrates LLMs into BO by exploiting their low evaluation cost for global guidance while reserving expensive real experiments for high-uncertainty regions. Theoretical analysis and extensive experiments demonstrate that LABO achieves superior sample efficiency and consistently outperforms existing approaches under practical experimental budgets.
\section*{Acknowledgements}
This work was supported by the New Generation Artificial Intelligence National Science and Technology Major Project (No. 2025ZD0121802). It was also supported by the New Generation Artificial Intelligence National Science and Technology Major Project (No. 2025ZD0122901) and the National Natural Science Foundation of China (No. 62572313).
\section*{Impact Statement}
This paper presents work whose goal is to advance the field of Machine Learning. There are many potential societal consequences of our work, none which we feel must be specifically highlighted here.

\bibliography{example_paper}
\bibliographystyle{icml2026}

\newpage
\appendix
\onecolumn
\section*{Appendix}
\section{Theorem}
\label{appendix:A}
\begin{assumption}[KOH-type joint GP]\label{assump:jointGP}
Let $\mathcal{X}\subset\mathbb{R}^d$ be compact. 
We assume that the LLM- and real-fidelity functions
\[
f_L:\mathcal{X}\to\mathbb{R}, 
\qquad 
f_R:\mathcal{X}\to\mathbb{R}
\]
form a zero-mean joint GP and a count $\rho$
\[
f_L\sim\mathcal{GP}(0,k_L),\quad \delta\sim\mathcal{GP}(0,k_\delta),\quad f_R=\rho f_L+\delta
\]
with co-kriging covariance structure: for any $x,x'\in\mathcal{X}$,
\begin{align}
\mathrm{Cov}\!\left(f_L(x),f_L(x')\right) &= k_L(x,x'), \label{eq:covLL}\\
\mathrm{Cov}\!\left(f_R(x),f_L(x')\right) &= \rho\, k_L(x,x'), \label{eq:covHL}\\
\mathrm{Cov}\!\left(f_R(x),f_R(x')\right) &= \rho^{2} k_L(x,x') + k_\delta(x,x'), \label{eq:covHH}
\end{align}
where $k_L$ and $k_\delta$ are positive definite kernels, and 
$\rho\in\mathbb{R}$ is a scalar scaling parameter.
\end{assumption}
\begin{lemma}{\citep[Theorem~2]{chowdhury2017kernelized}}\label{lem:rkhs_ucb}
Let $\mathcal X$ be compact and $k:\mathcal X\times\mathcal X\to\mathbb R$ a continuous kernel scaled so that $k(x,x)\le 1$. 
Assume the (possibly mean-centered) target function $f$ satisfies $f\in\mathcal H_k$ with $\|f\|_{\mathcal H_k}\le B$, and observations follow
\[
y_t \;=\; f(x_t)\;+\;\varepsilon_t,\qquad t=1,2,\ldots,
\]
where $\{\varepsilon_t\}$ are independent $R$-sub-Gaussian noise variables. 
Let $K_{t-1}=[k(x_i,x_j)]_{i,j=1}^{t-1}$, $k_{t-1}(x)=[k(x_1,x),\ldots,k(x_{t-1},x)]^\top$, and define the GP posterior (with noise variance $\sigma^2$ and prior mean $0$) by
\[
\mu_{t-1}(x)\;=\;k_{t-1}(x)^\top\!\big(K_{t-1}+\sigma^2 I\big)^{-1}\!y_{1:t-1},
\quad\]
\[
s_{t-1}^2(x)\;=\;k(x,x)-k_{t-1}(x)^\top\!\big(K_{t-1}+\sigma^2 I\big)^{-1}\!k_{t-1}(x),
\]
where $y_{1:t-1}=[y_1,\dots,y_{t-1}]^\top$. 
Let the (maximal) information gain be
\[
\gamma_t \;\coloneqq\; \max_{A\subset\mathcal X,\;|A|=t}\; \tfrac12\log\det\!\big(I+\sigma^{-2}K_A\big).
\]
Then for any $\delta\in(0,1)$, with probability at least $1-\delta$, simultaneously for all $t\ge 1$ and all $x\in\mathcal X$,
\[
\big|\,f(x)-\mu_{t-1}(x)\,\big|
\;\le\;
\Big(B + R\,\sqrt{\,2\big(\gamma_{t-1}+1+\ln\tfrac1\delta\big)}\Big)\;\, s_{t-1}(x).
\]
Equivalently, defining
\[
\beta_t(B)\;\coloneqq\; B^2 \;+\; 2R^2\!\left(\gamma_{t-1}+1+\ln\tfrac1\delta\right),
\]
we have 
\begin{equation}\label{equ:lem1}
    \big|f(x)-\mu_{t-1}(x)\big|\le \sqrt{\beta_t(B)}\,s_{t-1}(x)
\end{equation}\ uniformly over all $t,x$ with probability at least $1-\delta$.
\end{lemma}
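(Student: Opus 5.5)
The plan is to reproduce the argument of \citet{chowdhury2017kernelized}. It splits the estimation error into a deterministic \emph{bias} term controlled by the RKHS norm bound $B$, and a stochastic \emph{noise} term controlled by a self-normalized martingale inequality in the RKHS.

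\textbf{Feature-space formulation.} Let $\varphi(x)=k(x,\cdot)\in\mathcal H_k$ and let $\Phi_{t-1}:\mathcal H_k\to\mathbb R^{t-1}$ be the evaluation operator $g\mapsto (g(x_i))_{i\le t-1}$. Set $V_{t-1}=\Phi_{t-1}^\ast\Phi_{t-1}+\lambda I$ with $\lambda=\sigma^2$. The push-through identity gives
\[
\mu_{t-1}(x)=\langle \varphi(x), V_{t-1}^{-1}\Phi_{t-1}^\ast y_{1:t-1}\rangle,
\qquad
s_{t-1}^2(x)=\lambda\,\langle\varphi(x),V_{t-1}^{-1}\varphi(x)\rangle .
\]
Substituting $y_{1:t-1}=\Phi_{t-1}f+\varepsilon_{1:t-1}$ and using $V_{t-1}^{-1}\Phi_{t-1}^\ast\Phi_{t-1}=I-\lambda V_{t-1}^{-1}$ yields the decomposition
\[
f(x)-\mu_{t-1}(x)=\lambda\langle\varphi(x),V_{t-1}^{-1}f\rangle-\langle\varphi(x),V_{t-1}^{-1}\Phi_{t-1}^\ast\varepsilon_{1:t-1}\rangle .
\]

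\textbf{Bias term.} Apply Cauchy--Schwarz in the $V_{t-1}^{-1}$ inner product. The first term is at most $\lambda\|f\|_{V_{t-1}^{-1}}\,\|\varphi(x)\|_{V_{t-1}^{-1}}$. Since $V_{t-1}\succeq\lambda I$, we have $\lambda\|f\|_{V_{t-1}^{-1}}\le\sqrt\lambda\,B$. Together with $\|\varphi(x)\|_{V_{t-1}^{-1}}=s_{t-1}(x)/\sqrt\lambda$, the bias is at most $B\,s_{t-1}(x)$. This step is routine.

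\textbf{Noise term.} The second term is at most $\frac{s_{t-1}(x)}{\sqrt\lambda}\,\|\Phi_{t-1}^\ast\varepsilon_{1:t-1}\|_{V_{t-1}^{-1}}$. In kernel form,
\[
\|\Phi_{t-1}^\ast\varepsilon\|_{V_{t-1}^{-1}}^2=\varepsilon^\top K_{t-1}(K_{t-1}+\lambda I)^{-1}\varepsilon .
\]
It remains to show that, uniformly in $t$ with probability $1-\delta$,
\[
\varepsilon^\top K_{t-1}(K_{t-1}+\lambda I)^{-1}\varepsilon\le 2R^2\lambda\Big(\tfrac12\log\det(I+\lambda^{-1}K_{t-1})+\text{slack}+\ln\tfrac1\delta\Big).
\]

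\textbf{Main obstacle: the self-normalized bound.} I expect this uniform-in-time inequality to be the hardest step. I would prove it by the method of mixtures. Since $\{\varepsilon_t\}$ is conditionally $R$-sub-Gaussian and $x_t$ is predictable, for any fixed $g$ the process
\[
M_t(g)=\exp\Big(\sum_{s}\big(\varepsilon_s g(x_s)/R-\tfrac12 g(x_s)^2\big)\Big)
\]
is a nonnegative supermartingale. Averaging over a Gaussian process prior $g\sim\mathcal{GP}(0,\eta^{-1}k)$ preserves this property, and the Gaussian integral can be evaluated in closed form as
\[
\det(I+\eta^{-1}K_t)^{-1/2}\exp\!\big(\tfrac{1}{2R^2}\,\varepsilon^\top((\eta K_t^{-1}+I)^{-1})\varepsilon\big).
\]
Ville's maximal inequality then gives the uniform-in-$t$ bound
\[
\varepsilon^\top\big(K_t(K_t+\eta I)^{-1}\big)\varepsilon\le 2R^2\big(\tfrac12\log\det(I+\eta^{-1}K_t)+\ln\tfrac1\delta\big).
\]

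\textbf{Matching constants.} The delicate point is aligning the regularization $\eta$ in the mixture with the GP noise level $\sigma^2$ and with the normalization of $\gamma_t$. Following Chowdhury--Gopalan, I would take the regularizer to be $\sigma^2=1+\eta$ with $\eta=2/T$ (or $\eta=1/t$). The mismatch between $\log\det(I+K)$ and $\log\det(I+\sigma^{-2}K)$, together with the factor $1+\eta$, then costs at most an additive $\eta t\le 2$, absorbed as the ``$+1$'' in $2(\gamma_{t-1}+1+\ln\frac1\delta)$ after dividing by $2$. Finally, bound $\tfrac12\log\det(I+\sigma^{-2}K_{t-1})\le\gamma_{t-1}$ by the definition of $\gamma_{t-1}$, since $K_{t-1}$ is the kernel matrix of a $(t-1)$-point set.

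\textbf{Conclusion.} Adding the bias and noise bounds gives
\[
|f(x)-\mu_{t-1}(x)|\le\Big(B+R\sqrt{2(\gamma_{t-1}+1+\ln\tfrac1\delta)}\Big)s_{t-1}(x).
\]
By $(a+b)^2\le 2a^2+2b^2$ one may then also pass to the stated $\sqrt{\beta_t(B)}$ form. The equivalence as literally written, with $\beta_t=B^2+2R^2(\cdot)$, requires $\sqrt{a^2+b^2}\ge a+b$, which fails in general. I would therefore define $\beta_t$ up to a factor $2$, or read $\sqrt{\beta_t}$ as $B+R\sqrt{2(\cdot)}$, and note this adjustment explicitly.
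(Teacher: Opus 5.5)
Your proposal is correct and follows the same argument the paper relies on: the paper gives no proof of its own, only the citation to Chowdhury--Gopalan's Theorem~2, whose proof is your RKHS bias/noise split, a Gaussian-process-mixture supermartingale, and Ville's inequality. You are also right that the ``equivalently'' form with $\beta_t(B)=B^2+2R^2(\cdot)$ needs an extra factor $2$, and that $\sigma^2$ cannot be left arbitrary: as $\sigma^2\to 0$ the stated bound already fails after one observation. The one thing to fix is that $\eta$ does double duty in your write-up, so make the mixture scale explicit. With the unit-scale prior $\mathcal{GP}(0,k)$ you get $\log\det(I+K_{t-1})\le 2\gamma_{t-1}+\eta(t-1)$ as you intend, whereas a prior $\mathcal{GP}(0,\eta^{-1}k)$ with $\eta=2/T$ would not give this; alternatively, the prior $\mathcal{GP}(0,\sigma^{-2}k)$ gives the bound for every $\sigma^2\ge 1$, without the $+1$ and uniformly in $t$.
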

\begin{lemma}{\citep[Lemma~5.3 \& Lemma~5.4]{srinivas2009gaussian}}\label{lem:width_sum}
Under the same setting and notation as in Lemma~\ref{lem:rkhs_ucb} (with $k(x,x)\le 1$), for any chosen sequence $\{x_t\}_{t=1}^T$ the following hold:
\begin{enumerate}
\item (\emph{Information gain decomposition}) 
\[
I\big(y_{1:T}; f\big)
\;=\;
\tfrac12\sum_{t=1}^T \log\!\Big(1+\sigma^{-2}\,s_{t-1}^2(x_t)\Big).
\]
\item (\emph{Capped variance sum}) 
\[
\sum_{t=1}^T \min\!\Big\{\,1,\ \sigma^{-2}\,s_{t-1}^2(x_t)\Big\}
\;\le\; 2\,\gamma_T.
\]
\item (\emph{Width-sum bound}) Consequently,
\begin{equation}\label{equ:lem2}
    \sum_{t=1}^T s_{t-1}(x_t)
\;\le\; \sqrt{\,2T\,\gamma_T\,}.
\end{equation}
\end{enumerate}
\end{lemma}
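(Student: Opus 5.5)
The plan is to prove the three parts in order, each feeding the next. Part 1 is an exact entropy computation under the Gaussian model. Part 2 is an elementary scalar inequality summed over $t$ and combined with the definition of $\gamma_T$. Part 3 follows from Part 2 by Cauchy--Schwarz. Throughout, $I(y_{1:T};f)$ is understood under the GP prior $f\sim\mathcal{GP}(0,k)$ with Gaussian noise of variance $\sigma^2$, which is the Bayesian model that defines $\mu_{t-1}$ and $s_{t-1}^2$ in Lemma~\ref{lem:rkhs_ucb}. The true $f$ in the RKHS setting plays no role here: the $s_{t-1}^2(x_t)$ depend only on the inputs.

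\emph{Part 1.} I would write $I(y_{1:T};f)=H(y_{1:T})-H(y_{1:T}\mid f)$. Given $f$, the $y_t$ are independent $\mathcal N(f(x_t),\sigma^2)$, so $H(y_{1:T}\mid f)=\tfrac T2\log(2\pi e\sigma^2)$. By the chain rule, $H(y_{1:T})=\sum_t H(y_t\mid y_{1:t-1})$. Under the model, $y_t\mid y_{1:t-1}\sim\mathcal N\big(\mu_{t-1}(x_t),\,s_{t-1}^2(x_t)+\sigma^2\big)$. This holds even when $x_t$ is chosen adaptively, because $x_t$ is a function of $y_{1:t-1}$ and the conditional variance does not depend on the observed values. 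Hence $H(y_t\mid y_{1:t-1})=\tfrac12\log\big(2\pi e(\sigma^2+s_{t-1}^2(x_t))\big)$. Subtracting the two entropies gives the identity. In addition, for fixed inputs $I(y_{1:T};f)=\tfrac12\log\det(I+\sigma^{-2}K_{x_{1:T}})\le\gamma_T$, which is what Part 2 needs.

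\emph{Part 2.} Set $u_t=\sigma^{-2}s_{t-1}^2(x_t)\ge0$. I would use the scalar bound $\min\{1,u\}\le \log(1+u)/\log 2$ for all $u\ge0$. On $[0,1]$ it holds because $\log(1+u)$ is concave with $\log(1+0)=0$ and $\log(1+1)=\log 2$, so $\log(1+u)\ge u\log2$. For $u>1$ it holds because $\log(1+u)>\log 2$. Summing and applying Part 1 gives
\[
\sum_{t=1}^T\min\{1,u_t\}\le \frac{2}{\log 2}\,I(y_{1:T};f)\le \frac{2}{\log 2}\,\gamma_T .
\]

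\emph{Main obstacle: the constant in Part 2.} The lemma as worded states the bound $2\gamma_T$. The route above gives $2\gamma_T/\log 2$ instead. The sharper constant cannot come from this pointwise route, since $\min\{1,u\}\le\log(1+u)$ fails at $u=1$. So this step is where the statement needs care. My plan is first to look for a sharper argument, for example one exploiting that only a few $u_t$ can exceed $1$. If none works, I would prove the bound with the absolute constant $1/\log 2$ and record that the displayed constant should be read up to this absolute factor. That is the sense in which \citet{srinivas2009gaussian} state it, with $C_1=8/\log(1+\sigma^{-2})$. The factor is harmless for the $\mathcal O(\cdot)$ use of this lemma in Theorem~\ref{Theorem 1}.

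\emph{Part 3.} By Cauchy--Schwarz, $\sum_t s_{t-1}(x_t)\le\sqrt{T\sum_t s_{t-1}^2(x_t)}$. Because $s_{t-1}^2\le k(x,x)\le1$, each term satisfies $s_{t-1}^2(x_t)\le\max\{1,\sigma^2\}\min\{1,u_t\}$. If $u_t\le1$, this holds with equality times $\sigma^2$; otherwise $s^2\le 1$. Combining with Part 2 gives
\[
\sum_{t=1}^T s_{t-1}(x_t)\le\sqrt{2T\gamma_T\max\{1,\sigma^2\}/\log2}.
\]
This matches \eqref{equ:lem2} up to the same absolute constant, and exactly when one normalises $\sigma^2\le1$ and absorbs $1/\log 2$ into the constant.
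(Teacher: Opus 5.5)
Your Part 1, your scalar inequality and your Cauchy--Schwarz step are correct, and they follow the standard argument of Srinivas et al.\ (the paper gives no proof of its own beyond that citation). The step that would fail is your plan to first ``look for a sharper argument'' yielding the constant $2$. No such argument exists, because with natural logarithms Parts 2 and 3 are false as displayed. Take $T=1$, any kernel with $k(x,x)\equiv 1$ (e.g.\ squared-exponential), and $\sigma^2=1$. Then $s_0^2(x_1)=1$ and $\gamma_1=\tfrac12\ln 2$, so Part 2 asserts $1\le\ln 2$ and Part 3 asserts $1\le\sqrt{\ln 2}$. This is not a small-$T$ artifact. Use the kernel with $k(x,x')=1$ if $x=x'$ and $0$ otherwise, on a finite domain, with $T$ distinct points. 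The left side of Part 2 is $T\min\{1,\sigma^{-2}\}$, while $2\gamma_T=T\ln(1+\sigma^{-2})$. The left side of Part 3 is $T$, while $\sqrt{2T\gamma_T}=T\sqrt{\ln(1+\sigma^{-2})}$, which is of order $T/\sigma$ for large $\sigma$. So the factor missing from Part 3 depends on $\sigma$ and is not absolute, whatever the base of the logarithm. In fact $\min\{1,u\}\le\ln(1+u)$ fails on all of $(0,e-1)$, not just at $u=1$. With base-$2$ logarithms your concavity argument gives $\min\{1,u\}\le\log_2(1+u)$. Part 2 then holds as displayed, and Part 3 holds as displayed provided $\sigma^2\le 1$. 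So the displayed constants are what one gets only if information is measured in bits.

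Close the question by correcting the statement rather than searching further. Your bounds $\sum_t\min\{1,u_t\}\le 2\gamma_T/\ln 2$ and $\sum_t s_{t-1}(x_t)\le\sqrt{2T\gamma_T\max\{1,\sigma^2\}/\ln 2}$ are valid. What Srinivas et al.\ actually prove skips the cap; this is their $C_1=8/\log(1+\sigma^{-2})$, which the paper's restatement drops. Since $s_{t-1}^2(x_t)\in[0,1]$ and $w\mapsto\ln(1+\sigma^{-2}w)$ is concave and vanishes at $0$, one has $s_{t-1}^2(x_t)\le\ln\bigl(1+\sigma^{-2}s_{t-1}^2(x_t)\bigr)/\ln(1+\sigma^{-2})$. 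With Part 1 this gives
\[
\sum_{t=1}^T s_{t-1}^2(x_t)\le\frac{2\gamma_T}{\ln(1+\sigma^{-2})},
\qquad
\sum_{t=1}^T s_{t-1}(x_t)\le\sqrt{\frac{2T\gamma_T}{\ln(1+\sigma^{-2})}}.
\]
One checks that $1/\ln(1+\sigma^{-2})\le\max\{1,\sigma^2\}/\ln 2$ for every $\sigma^2>0$, with equality at $\sigma^2=1$. For $\sigma^2\ge 1$ this is concavity again; for $\sigma^2\le 1$ it is monotonicity. So the uncapped route is uniformly sharper for Part 3, and your capped route is needed only for Part 2 itself. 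Either corrected form suffices for the paper. Lemma~\ref{lem:regret-HF-good} and Lemma~\ref{lem:regret-bad-and-LF} invoke this lemma only through unspecified constants ($C_1$ and $C_4$), so you are right that the correction is harmless downstream.
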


\begin{lemma}[Equivalence to the KOH residual model]\label{lemma:equiv}
Under Assumption~\ref{assump:jointGP}, Then:
\begin{enumerate}
    \item $\delta$ is a GP with mean zero and covariance $k_\delta$:
    \[
    \delta \sim \mathcal{GP}(0,k_\delta).
    \]
    \item $\delta$ and $f_L$ are independent; i.e., for all $x,x'\in\mathcal X$,
    \[
    \mathrm{Cov}\!\left(\delta(x),f_L(x')\right)=0.
    \]
    \item For all $x\in\mathcal{X}$, the variance decomposition holds:
    \[
    \mathrm{Var}(f_R(x))
    =\rho^{2}\mathrm{Var}(f_L(x))+\mathrm{Var}(\delta(x)).
    \]
\end{enumerate}
Conversely, suppose $f_L\sim\mathcal{GP}(0,k_L)$ and $\delta\sim\mathcal{GP}(0,k_\delta)$ 
are independent GPs and define
\[
f_R(x)=\rho f_L(x)+\delta(x).
\]
Then $(f_L,f_R)$ is a joint GP satisfying 
the covariance structure in Assumption~\ref{assump:jointGP}.
\end{lemma}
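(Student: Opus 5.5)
The claim reduces to covariance bookkeeping under Assumption~\ref{assump:jointGP}: the triple $(f_L,\delta,f_R)$ is jointly Gaussian and $\delta = f_R - \rho f_L$ is a linear functional of the joint GP. For the forward direction I take the joint GP $(f_L,f_R)$ with covariances \eqref{eq:covLL}--\eqref{eq:covHH} and \emph{define} $\delta := f_R-\rho f_L$. Finite-dimensional marginals of $\delta$ are linear images of Gaussian vectors $(f_L(x_1),\dots,f_R(x_n))$, so $\delta$ is a GP. Its mean is $0-\rho\cdot 0=0$.

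Its covariance follows from bilinearity:
\begin{align*}
\mathrm{Cov}(\delta(x),\delta(x')) &= \mathrm{Cov}(f_R(x),f_R(x')) - \rho\,\mathrm{Cov}(f_R(x),f_L(x')) \\
&\quad - \rho\,\mathrm{Cov}(f_L(x),f_R(x')) + \rho^2 k_L(x,x').
\end{align*}
Substituting \eqref{eq:covHL} and \eqref{eq:covHH}, and using the symmetry of $k_L$ for the cross term $\mathrm{Cov}(f_L(x),f_R(x'))=\rho k_L(x',x)=\rho k_L(x,x')$, this collapses to $k_\delta(x,x')$. That proves item 1.

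For item 2, I compute
\[
\mathrm{Cov}(\delta(x),f_L(x')) = \rho k_L(x,x') - \rho k_L(x,x') = 0.
\]
Because $(\delta,f_L)$ is jointly Gaussian (again a linear image of the joint GP), zero cross-covariance at all finite collections gives independence of the processes. This is the one point needing care: uncorrelatedness alone is not enough. I will state explicitly that every finite vector $(\delta(x_i),f_L(x_j'))$ is Gaussian with block-diagonal covariance. The density then factorizes, so the processes are independent via Kolmogorov consistency.

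Item 3 follows by setting $x=x'$ in \eqref{eq:covHH}, giving $\mathrm{Var}(f_R(x))=\rho^2 k_L(x,x)+k_\delta(x,x)$, and identifying $k_L(x,x)=\mathrm{Var}(f_L(x))$ and $k_\delta(x,x)=\mathrm{Var}(\delta(x))$ by item 1.

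For the converse, independence of the Gaussian processes $f_L$ and $\delta$ makes $(f_L,\delta)$ jointly Gaussian. Hence $(f_L,f_R)=(f_L,\rho f_L+\delta)$ is a joint GP as a linear image, with mean zero. Expanding by bilinearity with the cross terms vanishing:
\begin{align*}
\mathrm{Cov}(f_R(x),f_L(x')) &= \rho k_L(x,x'), \\
\mathrm{Cov}(f_R(x),f_R(x')) &= \rho^2k_L(x,x')+k_\delta(x,x'),
\end{align*}
which are exactly \eqref{eq:covHL} and \eqref{eq:covHH}; \eqref{eq:covLL} holds by hypothesis. There is no real obstacle beyond the Gaussian-independence justification in item 2.
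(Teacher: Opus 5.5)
Your proposal is correct and follows essentially the same route as the paper. In the forward direction you treat $\delta=f_R-\rho f_L$ as a linear image of the joint GP, use bilinearity to recover $k_\delta$ and a vanishing cross-covariance, and invoke joint Gaussianity for independence; in the converse you compute the covariances directly. The differences are cosmetic: you read item 3 off \eqref{eq:covHH} at $x=x'$, whereas the paper expands $\mathrm{Var}(\rho f_L+\delta)$ and kills the cross term using item 2; and you are more explicit about the symmetry of $k_L$ and about joint Gaussianity in the converse. One small tightening: justify the factorization in item 2 with characteristic functions rather than densities, since a finite-dimensional covariance matrix (for instance one built from repeated points) may be singular.
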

\begin{proof}
\textbf{(Forward direction).}
Assume Assumption~\ref{assump:jointGP} holds. 
Since $(f_L,f_R)$ is a joint GP, any linear transformation of 
$(f_L,f_R)$ is Gaussian. Hence the process
\[
\delta(x)=f_R(x)-\rho f_L(x)
\]
is Gaussian with mean zero.

We compute its covariance. For any $x,x'\in\mathcal X$,
\begin{align*}
\mathrm{Cov}\!\left(\delta(x),\delta(x')\right)
&= \mathrm{Cov}\!\left(f_R(x)-\rho f_L(x),\, f_R(x')-\rho f_L(x')\right)\\
&= \mathrm{Cov}(f_R(x),f_R(x'))
 - \rho\,\mathrm{Cov}(f_R(x),f_L(x')) \\
&\quad - \rho\,\mathrm{Cov}(f_L(x),f_R(x'))
 + \rho^2 \mathrm{Cov}(f_L(x),f_L(x')).
\end{align*}
Substituting \eqref{eq:covLL}--\eqref{eq:covHH}, we obtain
\[
\mathrm{Cov}\!\left(\delta(x),\delta(x')\right)
= k_\delta(x,x').
\]
Therefore $\delta\sim\mathcal{GP}(0,k_\delta)$.

Next, for any $x,x'$,
\begin{align*}
\mathrm{Cov}\!\left(\delta(x),f_L(x')\right)
&= \mathrm{Cov}\!\left(f_R(x)-\rho f_L(x),\, f_L(x')\right)\\
&= \rho k_L(x,x') - \rho k_L(x,x')\\
&= 0.
\end{align*}
Since the joint distribution is Gaussian, zero covariance implies independence. 
Thus $\delta \perp f_L$.

Finally, using $f_R = \rho f_L + \delta$, we have
\[
\mathrm{Var}(f_R(x))
= \rho^{2}\mathrm{Var}(f_L(x))
 + \mathrm{Var}(\delta(x))
 + 2\rho\,\mathrm{Cov}(f_L(x),\delta(x)).
\]
The last term is zero, proving the variance decomposition.

\medskip
\textbf{(Reverse direction).}
Suppose instead that $f_L\sim\mathcal{GP}(0,k_L)$ and $\delta\sim\mathcal{GP}(0,k_\delta)$ 
are independent, and define $f_R=\rho f_L+\delta$. 
Then for any $x,x'$,
\[
\mathrm{Cov}(f_R(x),f_L(x')) = \rho\,k_L(x,x'),
\]
and
\[
\mathrm{Cov}(f_R(x),f_R(x'))
= \rho^{2}k_L(x,x') + k_\delta(x,x').
\]
Thus $(f_L,f_R)$ satisfies the covariance structure 
\eqref{eq:covLL}--\eqref{eq:covHH} in Assumption~\ref{assump:jointGP}.
\end{proof}
\begin{lemma}[KOH posterior UCB confidence bound]\label{lem:KOH-UCB}
Suppose
\[
f_L \sim \mathcal{GP}(0,k_L), \qquad
\delta \sim \mathcal{GP}(0,k_\delta),
\]
and $f_L \perp\!\!\!\perp \delta$.  
Let the real-fidelity function follow the Kennedy--O'Hagan autoregressive model
\[
f_R(x) = \rho\, f_L(x) + \delta(x),
\]
where $\rho$ is a fixed constant (within each iteration).

The real-fidelity observation model is
\[
y_t^H = f_R(x_t) + \varepsilon_t,
\qquad t=1,2,\dots,
\]
where $\varepsilon_t$ are independent, zero-mean, $R$-sub-Gaussian noises:
\[
\mathbb{E}\!\left[e^{\lambda \varepsilon_t}\right]
\le e^{\frac{\lambda^2 R^2}{2}},
\qquad \forall \lambda\in\mathbb{R}.
\]

Define the induced real-fidelity kernel
\[
k_R(x,x') := \rho^2 k_L(x,x') + k_\delta(x,x').
\]
Let $(\mu_{t-1}^H, \sigma_{t-1}^H)$ denote the GP posterior mean and
standard deviation under prior kernel $k_R$ and observations
$\mathcal{D}_{t-1} = \{(x_s,y_s^H)\}_{s=1}^{t-1}$.

Let $\gamma_t$ be the maximum information gain of kernel $k_R$ on $\mathcal{X}$:
\[
\gamma_t := \max_{A\subset\mathcal{X},\, |A|=t}
I\big(f_R(A);\, y_A\big).
\]

Then there exists a universal constant $C>0$ such that, for
\[
\beta_t
= C\Big(\log(1/\delta) + \gamma_{t-1}\Big),
\]
the following high-probability event holds:
\[
\mathcal{E}
:= \left\{
  \big|f_R(x) - \mu_{t-1}^H(x)\big|
  \le \sqrt{\beta_t}\,\sigma_{t-1}^H(x),
  \quad \forall x\in\mathcal{X},\ \forall t\le T
\right\},
\]
and
\[
\mathbb{P}(\mathcal{E}) \ge 1 - \delta.
\]
\end{lemma}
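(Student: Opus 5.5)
The plan is to reduce the statement to Lemma~\ref{lem:rkhs_ucb}, applied to a single GP with kernel $k_R$.

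\textbf{Step 1: identify the law of $f_R$.} By the converse direction of Lemma~\ref{lemma:equiv}, independence of $f_L$ and $\delta$ together with $f_R=\rho f_L+\delta$ makes $f_R$ a zero-mean GP with covariance $k_R=\rho^2 k_L+k_\delta$. Since $\rho$ is fixed within each iteration, $k_R$ is a valid positive definite kernel. It is continuous whenever $k_L$ and $k_\delta$ are, and we rescale so that $k_R(x,x)\le 1$, absorbing the scale $\rho^2\sup k_L+\sup k_\delta$ into the constant $C$.

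The real-fidelity observations $y^H_t=f_R(x_t)+\varepsilon_t$ involve only $f_R$. Hence the KOH posterior $(\mu^H_{t-1},\sigma^H_{t-1})$ built from $\mathcal{D}_{t-1}$ is exactly the standard GP posterior of Lemma~\ref{lem:rkhs_ucb} with kernel $k=k_R$.

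\textbf{Step 2: choose a concentration route.} There are two possible routes.

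The first is an RKHS route. Assume $\|f_R\|_{\mathcal H_{k_R}}\le B$ and invoke Lemma~\ref{lem:rkhs_ucb} directly. This gives $\beta_t=B^2+2R^2(\gamma_{t-1}+1+\ln(1/\delta))$, which is at most $C(\log(1/\delta)+\gamma_{t-1})$ for $C=\max\{B^2+2R^2,\,2R^2\}$ up to absorbing constants (using $\log(1/\delta)\ge$ a fixed positive constant, or adding $1$). The bound is uniform in $t$ and $x$, so its restriction to $t\le T$ is the event $\mathcal{E}$, and $\mathbb{P}(\mathcal{E})\ge 1-\delta$.

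The second is a Bayesian route. Treat $f_R$ as a sample from $\mathcal{GP}(0,k_R)$ and follow Srinivas et al. First obtain a pointwise Gaussian tail bound on a discretization of $\mathcal{X}$. Then pass to all of $\mathcal{X}$ using Lipschitz continuity of the sample paths. This route adds a $d\log t$ term to $\beta_t$.

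\textbf{Step 3: state the result in the RKHS form.} I would present the RKHS route, since it matches the stated form $\beta_t=C(\log(1/\delta)+\gamma_{t-1})$ and the paper has already quoted the needed lemma. I would add one remark: a sample path of $\mathcal{GP}(0,k_R)$ does not lie in $\mathcal H_{k_R}$ almost surely. The statement should therefore be read either under the norm assumption or under the Bayesian variant with the extra logarithmic term.

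\textbf{Main obstacle: the adaptive choice of $\rho$.} The algorithm re-estimates $\rho$ by ordinary least squares at each iteration, so the kernel $k_R$ changes with $t$ and a single application of Lemma~\ref{lem:rkhs_ucb} does not literally apply. I would handle this by fixing $\rho$ as the true model parameter in the analysis, as the statement's parenthetical suggests. An alternative is a union bound over a finite grid of $\rho$ values in a bounded interval, which costs only an additive $\log$ term absorbed into $C$.

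A second subtlety is that the LLM-fidelity data $\mathcal{D}_L$ also enter the actual posterior. Since the lemma conditions only on $\mathcal{D}_{t-1}$, I would state it for real-fidelity data alone. I would also note that conditioning on extra independent LLM-fidelity observations only shrinks the variance, and its effect can be handled in the regret analysis.
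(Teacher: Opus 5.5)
Your proposal is correct and follows the paper's route: show $f_R\sim\mathcal{GP}(0,k_R)$ from the independence of $f_L$ and $\delta$, then invoke Lemma~\ref{lem:rkhs_ucb} with kernel $k_R$ on the real-fidelity data alone. The paper gets $f_R\sim\mathcal{GP}(0,k_R)$ directly from the finite-dimensional distributions, while you use the converse direction of Lemma~\ref{lemma:equiv}. Beyond that, your explicit hypotheses $\|f_R\|_{\mathcal H_{k_R}}\le B$ and $k_R(x,x)\le 1$ are ones the paper's proof uses without stating them, and you are right that a $\mathcal{GP}(0,k_R)$ sample path does not lie in $\mathcal H_{k_R}$ almost surely, so the reduction is valid only under that added norm bound.
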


\begin{proof}
Since $f_L$ and $\delta$ are independent GPs, for any finite set
$(x_1,\dots,x_n)$ we have
\[
\mathbf f_L :=
(f_L(x_1),\dots,f_L(x_n))^\top \sim \mathcal{N}(0, K_L),
\]
\[
\boldsymbol{\delta} :=
(\delta(x_1),\dots,\delta(x_n))^\top \sim \mathcal{N}(0, K_\delta),
\]
with $(\mathbf f_L,\boldsymbol{\delta})$ jointly Gaussian and independent.
Thus
\[
\mathbf f_R := (f_R(x_1),\dots,f_R(x_n))^\top
= \rho \mathbf f_L + \boldsymbol{\delta}
\sim \mathcal{N}\bigl(0,\ \rho^2 K_L + K_\delta\bigr).
\]
Hence
\[
\Cov(f_R(x_i), f_R(x_j))
= \rho^2 k_L(x_i,x_j) + k_\delta(x_i,x_j)
= k_R(x_i,x_j),
\]
so $f_R \sim \mathcal{GP}(0,k_R)$.

\medskip

Given the observation model
$y_t^H = f_R(x_t) + \varepsilon_t$ with sub-Gaussian noise,
we are exactly in the setting of the GP-UCB theorem.
By lemma \ref{lem:rkhs_ucb} choosing
\[
\beta_t = C\bigl(\log(1/\delta)+\gamma_{t-1}\bigr)
\]
ensures that the event
\[
|f_R(x)-\mu_{t-1}^H(x)|
\le \sqrt{\beta_t}\,\sigma_{t-1}^H(x),
\qquad
\forall x\in\mathcal{X},\ \forall t\le T,
\]
holds simultaneously with probability at least $1-\delta$.
This coincides with the definition of $\mathcal{E}$.

\end{proof}
\begin{definition}[Dynamic and limiting real-fidelity regions]\label{def:HF-regions}
At round $t$, let the KOH posterior variances be
\[
\Var_t[f_L(x)],\qquad
\Var_t[\delta(x)],\qquad
\Var_t[f_R(x)]
:= \rho^2 \Var_t[f_L(x)] + \Var_t[\delta(x)].
\]
Define the \emph{posterior mismatch ratio} at round $t$ as
\[
p_\Delta^{(t)}(x)
:= \frac{\Var_t[\delta(x)]}
        {\Var_t[f_R(x)]}
= \frac{\Var_t[\delta(x)]}
       {\rho^2 \Var_t[f_L(x)] + \Var_t[\delta(x)]}.
\]

Given a threshold $\alpha\in(0,1)$ and a cost ratio $c_L/c_R\in(0,1)$,
the \emph{dynamic real-fidelity region} at round $t$ is
\[
\mathcal X_R^{(t)}
:= \Bigl\{
  x\in\mathcal X:
  \ p_\Delta^{(t)}(x) > \alpha
  \ \text{or}\
  p_\Delta^{(t)}(x) > 1-\frac{c_L}{c_R}
\Bigr\}.
\]

The \emph{limiting real-fidelity region} is defined as the $\liminf$ of
the dynamic regions:
\[
\mathcal X_R^\star
:= \liminf_{t\to\infty} \mathcal X_R^{(t)}
= \bigcup_{s=1}^\infty \bigcap_{t\ge s} \mathcal X_R^{(t)}.
\]
We also define the limiting LLM-fidelity region
\[
\mathcal X_L^\star := \mathcal X\setminus \mathcal X_R^\star.
\]
\end{definition}

\begin{lemma}[Stabilization of the gate decisions]\label{lem:gate-stability}
Assume the setting of Lemma~\ref{lem:KOH-UCB} and
Definition~\ref{def:HF-regions}, and assume the design space
$\mathcal X$ is a finite set.\footnote{For continuous $\mathcal X$,
one may apply a suitable $\varepsilon$-net discretization as in
standard GP-UCB analyses; we present the finite case for clarity.}

For each $x\in\mathcal X$ and round $t\ge 0$, let
$\Var_t[f_L(x)]$, $\Var_t[\delta(x)]$ and $\Var_t[f_R(x)]$
denote the KOH posterior variances after $t$ rounds of (all-fidelity)
observations, and define $p_\Delta^{(t)}(x)$,
$\mathcal X_R^{(t)}$, $\mathcal X_R^\star$ and $\mathcal X_L^\star$
as in Definition~\ref{def:HF-regions}.

Assume that, for every $x\in\mathcal X$, the sequence
$p_\Delta^{(t)}(x)$ converges to a limit
\[
p_\Delta^{(\infty)}(x)
:= \lim_{t\to\infty} p_\Delta^{(t)}(x),
\]
and that this limit is non-degenerate with respect to the gate
thresholds:
\[
p_\Delta^{(\infty)}(x) \neq \alpha,
\qquad
p_\Delta^{(\infty)}(x) \neq 1-\frac{c_L}{c_R},
\qquad \forall x\in\mathcal X.
\]

Then there exists a (possibly random but finite) time $T_0<\infty$
such that, for all $t\ge T_0$ and all $x\in\mathcal X$,
\[
x\in \mathcal X_R^\star
\quad\Longleftrightarrow\quad
x\in \mathcal X_R^{(t)}.
\]
Equivalently, for each $x\in\mathcal X$ there exists a finite time
$T_x$ such that, for all $t\ge T_x$, the gate decision at $x$ does
not change and agrees with its membership in the limiting region
$\mathcal X_R^\star$.
\end{lemma}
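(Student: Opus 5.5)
The proof is a pointwise convergence argument on the finite set $\mathcal X$, followed by a maximum over finitely many times. We work with the two gate thresholds $\alpha$ and $\theta := 1-c_L/c_R$, both in $(0,1)$. Then $x\in\mathcal X_R^{(t)}$ holds if and only if $p_\Delta^{(t)}(x) > \min\{\alpha,\theta\} =: \kappa$. So membership is a single threshold test on $p_\Delta^{(t)}(x)$. By the non-degeneracy assumption, $p_\Delta^{(\infty)}(x)\neq \alpha$ and $p_\Delta^{(\infty)}(x)\neq\theta$, hence $p_\Delta^{(\infty)}(x)\neq\kappa$.

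Fix $x\in\mathcal X$ and set $\eta_x := |p_\Delta^{(\infty)}(x)-\kappa| > 0$. By the assumed convergence $p_\Delta^{(t)}(x)\to p_\Delta^{(\infty)}(x)$, there is a finite $T_x$ such that $|p_\Delta^{(t)}(x)-p_\Delta^{(\infty)}(x)| < \eta_x$ for all $t\ge T_x$. For such $t$ there are two cases.

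\textbf{Case 1: $p_\Delta^{(\infty)}(x) > \kappa$.} Then $p_\Delta^{(t)}(x) > \kappa$, so $x\in\mathcal X_R^{(t)}$ for all $t\ge T_x$. Hence $x\in\bigcap_{t\ge T_x}\mathcal X_R^{(t)}\subseteq \mathcal X_R^\star$.

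\textbf{Case 2: $p_\Delta^{(\infty)}(x) < \kappa$.} Then $p_\Delta^{(t)}(x) < \kappa$, so $x\notin\mathcal X_R^{(t)}$ for all $t\ge T_x$. Therefore $x\notin\bigcap_{t\ge s}\mathcal X_R^{(t)}$ for every $s$: for $s\le T_x$, the index $t=T_x$ witnesses exclusion, and for $s>T_x$ any $t\ge s$ does. So $x\notin\mathcal X_R^\star$.

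In either case, for $t\ge T_x$ we have $x\in\mathcal X_R^{(t)}\iff x\in\mathcal X_R^\star$. This gives the per-point formulation of the statement. For the uniform statement, set $T_0 := \max_{x\in\mathcal X} T_x$. This is finite because $\mathcal X$ is finite, and it is random only insofar as the posterior variances, and hence the $T_x$, depend on the realized query sequence.

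The only delicate point is the reduction of the disjunctive gate to the single threshold $\kappa$, together with checking that non-degeneracy at both thresholds implies $p_\Delta^{(\infty)}(x)\neq\kappa$. This is immediate, since $\kappa$ equals one of $\alpha$ or $\theta$. Everything else follows from the definition of the $\liminf$ of sets. No GP-specific facts from Lemma~\ref{lem:KOH-UCB} are needed beyond the fact that $p_\Delta^{(t)}(x)$ is well defined, which requires $\Var_t[f_R(x)]>0$. This holds because the noise variance is positive and $k_L$, $k_\delta$ are positive definite.
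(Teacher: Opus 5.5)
Your argument is correct and is essentially the paper's proof: the convergence of $p_\Delta^{(t)}(x)$ to a limit bounded away from the thresholds gives a per-point stabilization time $T_x$, and finiteness of $\mathcal X$ gives $T_0=\max_{x\in\mathcal X}T_x$. Your version is somewhat cleaner than the paper's in three ways: you reduce the disjunctive gate to the single threshold $\kappa=\min\{\alpha,1-c_L/c_R\}$; you case on the sign of $p_\Delta^{(\infty)}(x)-\kappa$ and obtain membership in $\mathcal X_R^\star$ directly from the $\liminf$ definition, whereas the paper cases on $x\in\mathcal X_R^\star$ versus $x\in\mathcal X_L^\star$ and asserts the matching inequality on $p_\Delta^{(\infty)}(x)$ ``by definition''; and you do not re-derive the convergence from monotone posterior variances, which the paper does even though convergence is already a hypothesis.
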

\begin{proof}

Fix any $x\in\mathcal X$.  
Since each GP posterior variance is obtained by conditioning on an
increasing set of observations, the sequences
$\Var_t[f_L(x)]$ and $\Var_t[\delta(x)]$ are non-increasing in $t$.
They are also bounded below by $0$, hence both sequences converge:
\[
\Var_t[f_L(x)] \to v_L(x)\ge 0, \qquad
\Var_t[\delta(x)] \to v_\delta(x)\ge 0.
\]
Consequently,
\[
\Var_t[f_R(x)]
= \rho^2 \Var_t[f_L(x)] + \Var_t[\delta(x)]
\to v_R(x):=\rho^2 v_L(x)+v_\delta(x).
\]
Since the map
\[
(a,b)\mapsto \frac{b}{\rho^2 a + b}
\]
is continuous on $\{(a,b)\in[0,\infty)^2:\rho^2 a + b>0\}$,
we obtain
\[
p_\Delta^{(t)}(x)
= \frac{\Var_t[\delta(x)]}{\Var_t[f_R(x)]}
\longrightarrow
p_\Delta^{(\infty)}(x)
:= \frac{v_\delta(x)}{\rho^2 v_L(x)+v_\delta(x)}.
\]

\medskip

Let the two thresholds be
\[
\tau_1 := \alpha,
\qquad
\tau_2 := 1-\frac{c_L}{c_R}.
\]
By assumption,
$p_\Delta^{(\infty)}(x)\neq \tau_1$ and
$p_\Delta^{(\infty)}(x)\neq \tau_2$.
Define the margin
\[
\varepsilon_x
:= \tfrac12
\min\bigl\{
  |p_\Delta^{(\infty)}(x)-\tau_1|,
  |p_\Delta^{(\infty)}(x)-\tau_2|
\bigr\} > 0.
\]
Since $p_\Delta^{(t)}(x)\to p_\Delta^{(\infty)}(x)$, there exists
$T_x<\infty$ such that
\[
|p_\Delta^{(t)}(x)-p_\Delta^{(\infty)}(x)|
\le \varepsilon_x,
\qquad \forall t\ge T_x.
\]

If $x\in\mathcal X_R^\star$, then by definition
\[
p_\Delta^{(\infty)}(x)>\tau_1
\quad\text{or}\quad
p_\Delta^{(\infty)}(x)>\tau_2.
\]
Without loss of generality, assume $p_\Delta^{(\infty)}(x)>\tau_1$.
Then $p_\Delta^{(\infty)}(x)-\tau_1\ge 2\varepsilon_x$, and hence
\[
p_\Delta^{(t)}(x)
\ge p_\Delta^{(\infty)}(x)-\varepsilon_x
> \tau_1
\qquad (t\ge T_x),
\]
so $x\in\mathcal X_R^{(t)}$ for all $t\ge T_x$.

If $x\in\mathcal X_L^\star$, then
\[
p_\Delta^{(\infty)}(x)\le \tau_1,
\qquad
p_\Delta^{(\infty)}(x)\le \tau_2,
\]
and at least one inequality is strict. Thus
\[
\max\{p_\Delta^{(\infty)}(x)-\tau_1,\,
      p_\Delta^{(\infty)}(x)-\tau_2\}
\le -2\varepsilon_x,
\]
so
\[
p_\Delta^{(t)}(x)
\le p_\Delta^{(\infty)}(x)+\varepsilon_x
< \max\{\tau_1,\tau_2\}
\qquad (t\ge T_x),
\]
and in particular
\[
p_\Delta^{(t)}(x)\le \tau_1,\qquad
p_\Delta^{(t)}(x)\le \tau_2.
\]
Hence $x\notin\mathcal X_R^{(t)}$ for all $t\ge T_x$.

Thus for each $x\in\mathcal X$, its gate decision stabilizes at time $T_x$.

\medskip

Since $\mathcal X$ is finite, define
\[
T_0 := \max_{x\in\mathcal X} T_x < \infty.
\]
Then for all $t\ge T_0$ and all $x\in\mathcal X$,
\[
x\in\mathcal X_R^\star
\quad\Longleftrightarrow\quad
x\in\mathcal X_R^{(t)}.
\]
Thus the dynamic real-fidelity region becomes constant for all
$t\ge T_0$ and coincides with the limiting region
$\mathcal X_R^\star$.  The lemma is proved.
\end{proof}
We define the instantaneous regret at round $t$ as
\[
r_t := f^{\*} - f_R(x_t),
\quad
f^{\*} := \max_{x\in\mathcal X} f_R(x),
\]
and decompose the index set $\{1,\dots,T\}$ according to the fidelity
and the limiting regions:
\[
\mathcal T_R^\star := \{t\le T : m_t = H,\ x_t\in\mathcal X_R^\star\},
\]
\[
\mathcal T_R^{\mathrm{bad}} := \{t\le T : m_t = H,\ x_t\in\mathcal X_L^\star\},
\qquad
\mathcal T_L := \{t\le T : m_t = L\}.
\]
We also denote $T_R^\star := |\mathcal T_R^\star|$ and
$T_L := |\mathcal T_L|$.
Let $\Psi_T(\mathcal A)$ denote the maximum mutual information of the
real-fidelity GP with kernel $k_R$ over $T$ points chosen in
$\mathcal A\subseteq\mathcal X$.
We work on the high-probability event $\mathcal E$ of
Lemma~\ref{lem:KOH-UCB} throughout the analysis.

    \begin{lemma}[Regret for real-fidelity pulls in the limiting HF region]\label{lem:regret-HF-good}
    Work on the high-probability event $\mathcal E$ of
    Lemma~\ref{lem:KOH-UCB}. Let $\mathcal X_R^\star$ be the limiting
    real-fidelity region from Definition~\ref{def:HF-regions} and
    $\mathcal T_R^\star$ the set of rounds in which both
    $m_t = H$ and $x_t\in\mathcal X_R^\star$.
    Then there exists a constant $C>0$ such that
    \[
    \sum_{t\in\mathcal T_R^\star} r_t
    \;\le\;
    C\,\sqrt{T_R^\star\,\beta_T\,\Psi_T(\mathcal X_R^\star)}.
    \]  
    \end{lemma}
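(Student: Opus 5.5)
The plan is the standard GP-UCB argument, restricted to the rounds in $\mathcal T_R^\star$. We work on the event $\mathcal E$ of Lemma~\ref{lem:KOH-UCB}. At every round $t$ the point $x_t$ maximizes the UCB index $\mu_{t-1}^H(x)+\sqrt{\beta_t}\,\sigma_{t-1}^H(x)$ under the composite kernel $k_R$. Let $x^\ast$ be a maximizer of $f_R$. Then on $\mathcal E$
\[
f^\ast \le \mu_{t-1}^H(x^\ast)+\sqrt{\beta_t}\,\sigma_{t-1}^H(x^\ast)\le \mu_{t-1}^H(x_t)+\sqrt{\beta_t}\,\sigma_{t-1}^H(x_t),
\]
and $f_R(x_t)\ge \mu_{t-1}^H(x_t)-\sqrt{\beta_t}\,\sigma_{t-1}^H(x_t)$. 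Subtracting gives $r_t\le 2\sqrt{\beta_t}\,\sigma_{t-1}^H(x_t)\le 2\sqrt{\beta_T}\,\sigma_{t-1}^H(x_t)$, since $\beta_t$ is nondecreasing. Summing over $t\in\mathcal T_R^\star$ and applying Cauchy--Schwarz yields
\[
\sum_{t\in\mathcal T_R^\star} r_t\le 2\sqrt{\beta_T}\Bigl(T_R^\star\sum_{t\in\mathcal T_R^\star}(\sigma_{t-1}^H(x_t))^2\Bigr)^{1/2}.
\]
Everything then reduces to bounding the variance sum by a constant multiple of $\Psi_T(\mathcal X_R^\star)$.

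For the variance sum, I would list the rounds of $\mathcal T_R^\star$ in order as $t_1<\dots<t_{T_R^\star}$. The posterior at round $t_j$ conditions on a superset of the real-fidelity observations $\{y_{t_i}^H: i<j\}$, and it also includes LLM-fidelity data, which under the joint GP is extra conditioning information. Conditioning on more data never increases Gaussian posterior variance, so $(\sigma_{t_j-1}^H(x_{t_j}))^2\le \tilde s_{j-1}^2(x_{t_j})$. Here $\tilde s_{j-1}$ is the posterior standard deviation of $f_R$ computed from the subsequence observations alone. Since $k_R(x,x)\le 1$ after scaling, each term satisfies $\tilde s^2\le 1$. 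Hence $\tilde s^2\le C_\sigma\log(1+\sigma^{-2}\tilde s^2)$ with $C_\sigma=1/\log(1+\sigma^{-2})$, as in Lemma~\ref{lem:width_sum}. The information-gain decomposition in Lemma~\ref{lem:width_sum}(1) then gives
\[
\sum_j \tilde s_{j-1}^2(x_{t_j})\le 2C_\sigma\, I\bigl(y_{t_1:t_{T_R^\star}};f_R\bigr).
\]
All the points $x_{t_j}$ lie in $\mathcal X_R^\star$ and there are $T_R^\star\le T$ of them, so this mutual information is at most $\Psi_T(\mathcal X_R^\star)$, using monotonicity of the maximal information gain in the number of points. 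Combining the two displays gives the claim with $C=2\sqrt{2C_\sigma}$.

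I expect the main obstacle to be the monotonicity step, which replaces the full posterior (real plus LLM data, all rounds) by the subsequence posterior. Two points need care. First, $\rho$ must be treated as fixed, as stated in Lemma~\ref{lem:KOH-UCB}; otherwise the kernel changes between rounds and variance monotonicity fails. I would state this explicitly as working conditionally on the (final) calibrated $\rho$. Second, the LLM observations enter through $f_L$ rather than $f_R$. I would handle them by noting that, in the Gaussian model, conditioning on any additional jointly Gaussian observations can only shrink $\Var[f_R(x)]$; this follows from the Schur-complement form of the posterior covariance. The remaining issue is that the UCB selection inequality must hold in every round, including the gated rounds. This is harmless, because the selection rule in Algorithm~\ref{algo1} is UCB regardless of which fidelity is then queried.
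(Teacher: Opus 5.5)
Your proposal is correct and follows the same route as the paper's proof: the UCB instantaneous bound $r_t\le 2\sqrt{\beta_T}\,\sigma_{t-1}^H(x_t)$ on $\mathcal E$, summed over $\mathcal T_R^\star$, followed by the information-gain width-sum bound over the real-fidelity pulls in $\mathcal X_R^\star$. One step is more careful than the paper's: you pass to the subsequence posterior using monotonicity of Gaussian posterior variance (with $\rho$ held fixed), whereas the paper simply cites Lemma~\ref{lem:width_sum}, which is stated only for posteriors built from the summed sequence itself and not from all past real- and LLM-fidelity data.
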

    
    \begin{proof}
    On the event $\mathcal E$ of Lemma~\ref{lem:KOH-UCB}, for any
    round $t$ and any $x\in\mathcal X$ we have the GP-UCB confidence bound
    \[
    |f_R(x) - \mu_{t-1}^H(x)|
    \le \sqrt{\beta_t}\,\sigma_{t-1}^H(x).
    \]
    Let $f^{\*} = \max_{x\in\mathcal X} f_R(x)$ and $\mu_{t-1}^H$ be the
    posterior mean at round $t-1$. Then, as in the standard GP-UCB
    analysis,
    \[
    \begin{aligned}
    r_t
    &= f^{\*} - f_R(x_t) \\
    &\le \bigl(f^{\*} - \mu_{t-1}^H(x_t)\bigr)
        + \bigl(\mu_{t-1}^H(x_t) - f_R(x_t)\bigr) \\
    &\le \sqrt{\beta_t}\,\sigma_{t-1}^H(x_t)
      + \sqrt{\beta_t}\,\sigma_{t-1}^H(x_t)
    = 2\sqrt{\beta_t}\,\sigma_{t-1}^H(x_t),
    \end{aligned}
    \]
    for all $t\le T$.
    
    Now restrict to $t\in\mathcal T_R^\star$. Since $\beta_t$ is
    non-decreasing in $t$, we have $\beta_t\le\beta_T$ for all
    $t\in\mathcal T_R^\star$, hence
    \[
    \sum_{t\in\mathcal T_R^\star} r_t
    \le 2\sqrt{\beta_T}\sum_{t\in\mathcal T_R^\star} \sigma_{t-1}^H(x_t).
    \]
    
    Next, we use the standard information-gain bound on the sum of
    posterior standard deviations. Let
    $A^\star := \{x_t : t\in\mathcal T_R^\star\}\subseteq \mathcal X_R^\star$
    be the multiset of query points corresponding to real-fidelity pulls
    in the limiting region. Then, by Lemma \ref{lem:width_sum},
    \[
    \sum_{t\in\mathcal T_R^\star} \sigma_{t-1}^H(x_t)
    \;\le\;
    C_1 \sqrt{T_R^\star\,\Psi_T(\mathcal X_R^\star)},
    \]
    for some constant $C_1>0$, where $\Psi_T(\mathcal X_R^\star)$ is the
    maximum mutual information achievable by querying $T$ points in
    $\mathcal X_R^\star$ under the kernel $k_R$.
    
    Combining the two inequalities yields
    \[
    \sum_{t\in\mathcal T_R^\star} r_t
    \le 2\sqrt{\beta_T}\,C_1
    \sqrt{T_R^\star\,\Psi_T(\mathcal X_R^\star)}
    = C\,\sqrt{T_R^\star\,\beta_T\,\Psi_T(\mathcal X_R^\star)},
    \]
    for an appropriate constant $C>0$. This proves the lemma.
    \end{proof}
    
\begin{lemma}[Regret outside $\mathcal X_R^\star$ and for LLM-fidelity rounds]\label{lem:regret-bad-and-LF}
Work on the event $\mathcal E$ of Lemma~\ref{lem:KOH-UCB} and under
the gate stabilization of Lemma~\ref{lem:gate-stability}.
Let
\[
\mathcal T_R^{\mathrm{bad}}
:= \{t\le T : m_t = H,\ x_t\in\mathcal X_L^\star\},
\qquad
\mathcal T_L := \{t\le T : m_t = L\},
\]
and let $T_R^{\mathrm{bad}} := |\mathcal T_R^{\mathrm{bad}}|$ and
$T_L := |\mathcal T_L|$.

Then, for any fixed $\alpha\in(0,1)$, there exist constants
$C_2,C_3>0$ such that:
\begin{enumerate}
\item The number of real-fidelity pulls in the limiting LLM-fidelity
      region is sublinear:
      \[
      T_R^{\mathrm{bad}} \;\le\; C_2\,T^\alpha.
      \]
\item The cumulative regret incurred by such real-fidelity pulls
      is bounded by
      \[
      \sum_{t\in\mathcal T_R^{\mathrm{bad}}} r_t
      \;\le\;
      C_3\,\sqrt{T^\alpha\,\beta_T\,\Psi_T(\mathcal X)}.
      \]
\item The cumulative regret over all LLM-fidelity rounds satisfies
      \[
      \sum_{t\in\mathcal T_L} r_t
      \;\le\;
      C_3\,\sqrt{T_L\,\beta_T\,\Psi_T(\mathcal X)}.
      \]
\end{enumerate}
\end{lemma}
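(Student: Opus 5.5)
The plan is to prove the three items in order, working on the event $\mathcal E$ of Lemma~\ref{lem:KOH-UCB} throughout and using the stabilization time $T_0$ from Lemma~\ref{lem:gate-stability}.

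\textbf{Item 1.} By Lemma~\ref{lem:gate-stability}, for every $t\ge T_0$ and every $x\in\mathcal X_L^\star$ we have $x\notin\mathcal X_R^{(t)}$. The gate therefore never triggers a real-fidelity query at such points after $T_0$. It follows that every $t\in\mathcal T_R^{\mathrm{bad}}$ satisfies $t<T_0$, so $T_R^{\mathrm{bad}}\le T_0$.

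Since $T_0$ is finite and does not depend on $T$, we get $T_0\le T_0\,T^{\alpha}$ for all $T\ge 1$ and any $\alpha>0$. The bound $T_R^{\mathrm{bad}}\le C_2T^\alpha$ then holds with $C_2:=T_0$. If $\alpha\le 0$ were wanted, the same constant works whenever $T\ge1$ with $\alpha=0$.

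This is the step I expect to be the main obstacle. $T_0$ is random and depends on the sample path, so $C_2$ is a path-dependent constant rather than a deterministic one. I would state this explicitly: the constant is finite almost surely on $\mathcal E$. Alternatively, one can intersect with an event $\{T_0\le M\}$ of probability at least $1-\delta'$ and absorb $\delta'$ into the failure probability.

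\textbf{Items 2 and 3.} I would reuse the per-round argument from the proof of Lemma~\ref{lem:regret-HF-good}. On $\mathcal E$, UCB selection gives $r_t\le 2\sqrt{\beta_t}\,\sigma^H_{t-1}(x_t)\le 2\sqrt{\beta_T}\,\sigma^H_{t-1}(x_t)$ for every round, since $\beta_t$ is nondecreasing. This holds regardless of which fidelity is queried, because $x_t$ is chosen by the UCB of the $f_R$ surrogate before the gate acts.

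Summing over $\mathcal T_R^{\mathrm{bad}}$ and applying Cauchy--Schwarz gives
\[
\sum_{t\in\mathcal T_R^{\mathrm{bad}}}\sigma^H_{t-1}(x_t)\le\sqrt{T_R^{\mathrm{bad}}\sum_{t\in\mathcal T_R^{\mathrm{bad}}}(\sigma^H_{t-1}(x_t))^2}.
\]
The capped variance sum of Lemma~\ref{lem:width_sum} bounds the inner sum by $C\,\Psi_T(\mathcal X)$. Here I use $k_R(x,x)\le 1$ after normalization, so the cap at $1$ only costs a constant factor $\sigma^2/\log(1+\sigma^{-2})$. I take $\Psi_T(\mathcal X)$ rather than a smaller region because these points lie in $\mathcal X_L^\star$ but conditioning is on all data. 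Monotonicity of posterior variance under extra conditioning lets me compare with the variances along the subsequence only. Inserting $T_R^{\mathrm{bad}}\le C_2T^\alpha$ from item 1 yields item 2.

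Item 3 is identical with the index set $\mathcal T_L$. The one extra point is that LLM-fidelity observations also reduce the $f_R$ posterior variance. This only helps, because the width bound at $x_t$ uses the actual posterior, which is at most the posterior conditioned on real-fidelity data alone. The information-gain bound of Lemma~\ref{lem:width_sum}, applied to the sequence $\{x_t\}_{t\in\mathcal T_L}$ viewed as (noisy) queries to $f_R$, then gives $\sum_{t\in\mathcal T_L}\sigma^H_{t-1}(x_t)\le C\sqrt{T_L\Psi_T(\mathcal X)}$. Multiplying by $2\sqrt{\beta_T}$ and enlarging constants to a common $C_3$ finishes both items.
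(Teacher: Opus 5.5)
Items 1 and 2 are sound. In item 1 you take a cleaner route than the paper. The paper starts from the per-point times $T_x$ and then appeals to a vague ``standard counting argument'' about UCB concentration to reach $T^\alpha$. You instead use the uniform stabilization time $T_0=\max_x T_x$, which exists because $\mathcal X$ is finite, and conclude directly that every bad real-fidelity pull happens before $T_0$. This gives $T_R^{\mathrm{bad}}\le T_0=O(1)$. Your caveat that this constant is path-dependent is correct, and it applies equally to the paper's $C_2$. In item 2, your reduction to the subsequence via monotonicity of Gaussian posterior variances is a detail the paper omits. It is valid because the rounds in $\mathcal T_R^{\mathrm{bad}}$ really are noisy observations of $f_R$.

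The gap is in item 3. At a round $t\in\mathcal T_L$ the only new datum is a noisy value of $f_L(x_t)$. In the KOH model this leaves $\Var[\delta(x_t)]$ untouched, so $\sigma^H_{t-1}(x_t)$ need not shrink the way the information-gain chain rule for $f_R$ requires. Applying Lemma~\ref{lem:width_sum} along $\{x_t\}_{t\in\mathcal T_L}$ needs each variance to be dominated by a posterior that has conditioned on noisy $f_R$-values at all earlier points of that subsequence. Your monotonicity remark (actual posterior $\le$ real-fidelity-only posterior) is true but useless here, because the real-fidelity-only posterior has no data at those points. The comparison you actually need, against hypothetical $f_R$-observations at the earlier LLM points, is false in general.

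In fact, without the gate your claimed inequality fails. Suppose $\rho^2k_L$ is negligible compared with $k_\delta$ and LLM-fidelity queries are repeated at a single point with no real-fidelity data. Then $\sigma^H_{t-1}(x_t)$ stays near $\sqrt{k_\delta(x_t,x_t)}$, and the sum grows linearly in $T_L$ while $\Psi_T(\mathcal X)$ grows sublinearly. Your argument never uses the gate, so it cannot be right as written.

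The missing step is the gate inequality. The event $m_t=L$ forces $p_\Delta^{(t)}(x_t)\le\alpha$. With variances taken under the posterior used at decision time, this gives $\Var[f_R(x_t)]\le\frac{\rho^2}{1-\alpha}\Var[f_L(x_t)]$. Since $f_L$ is exactly what is observed on these rounds, your monotonicity reduction plus Lemma~\ref{lem:width_sum} applied to $k_L$ (at the LLM noise level) bounds $\sum_{t\in\mathcal T_L}\Var[f_L(x_t)]$ by a constant times the information gain $\gamma^L_T$ of $k_L$. Next, $\rho^2\gamma^L_T\le C\,\Psi_T(\mathcal X)$ holds for three reasons: $\rho^2k_L\preceq k_R$; $\log\det(I+cK)$ is monotone in $K$; and $\log(1+ku)\le k\log(1+u)$ for $k\ge1$ absorbs any difference in noise levels. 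Cauchy--Schwarz then gives item 3, with $C_3$ depending on $\alpha$, $\rho$ and the noise ratio.

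The paper's proof makes the same unjustified step (``the same information-gain argument applied to these $T_L$ rounds''), so you inherit this hole rather than introduce it.

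Relatedly, in items 2--3 you use $\mathcal E$ for the posterior conditioned on all data, since that is what lets LLM rounds shrink $\sigma^H_{t-1}$ at all. However, Lemma~\ref{lem:KOH-UCB} as stated covers only the posterior built from real-fidelity data. The confidence event must therefore be re-derived, for example by applying Lemma~\ref{lem:rkhs_ucb} to the joint process $(x,m)\mapsto f_m(x)$ on $\mathcal X\times\{L,H\}$.
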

\begin{proof}
\textbf{(i) Sublinear number of bad-region HF pulls.}
By Lemma~\ref{lem:gate-stability}, for every $x\in\mathcal X_L^\star$
there exists a finite time $T_x$ such that for all $t\ge T_x$,
$x\notin\mathcal X_R^{(t)}$, i.e., the gate will always select
low fidelity if the algorithm proposes to evaluate at $x$.
Consequently, after time $T_x$ no further real-fidelity pulls can
occur at $x$.

Moreover, on the event $\mathcal E$, the GP-UCB rule ensures that
query points $x_t$ concentrate near the (unknown) maximizer, and in
regions where the posterior variance is already small, the UCB value
cannot remain large for many rounds. Combining these two facts yields
a standard counting argument: for any fixed $\alpha\in(0,1)$, there
exists a constant $C_2>0$ such that
\[
T_R^{\mathrm{bad}}
= \bigl|\{t\le T : m_t=H,\ x_t\in\mathcal X_L^\star\}\bigr|
\;\le\; C_2\,T^\alpha.
\]
Intuitively, a real-fidelity pull in $\mathcal X_L^\star$ either
occurs before the gate has stabilized at the corresponding point, or
is triggered by an unusually large UCB value, both of which can only
happen finitely or sublinearly often.

\medskip
\textbf{(ii) Regret for bad-region HF pulls.}
For $t\in\mathcal T_R^{\mathrm{bad}}$, we can reuse the UCB
instantaneous regret bound on $\mathcal E$:
\[
r_t \le 2\sqrt{\beta_t}\,\sigma_{t-1}^H(x_t)
\le 2\sqrt{\beta_T}\,\sigma_{t-1}^H(x_t).
\]
Summing over $t\in\mathcal T_R^{\mathrm{bad}}$,
\[
\sum_{t\in\mathcal T_R^{\mathrm{bad}}} r_t
\le 2\sqrt{\beta_T}\sum_{t\in\mathcal T_R^{\mathrm{bad}}}
\sigma_{t-1}^H(x_t).
\]

Let $A^{\mathrm{bad}} := \{x_t : t\in\mathcal T_R^{\mathrm{bad}}\}
\subseteq\mathcal X$ be the multiset of bad-region HF query points.
Using the same information-gain argument as in Lemma~\ref{lem:regret-HF-good},
but now over the full domain $\mathcal X$, we obtain
\[
\sum_{t\in\mathcal T_R^{\mathrm{bad}}} \sigma_{t-1}^H(x_t)
\;\le\;
C_4\,\sqrt{T_R^{\mathrm{bad}}\,\Psi_T(\mathcal X)},
\]
for some constant $C_4>0$. Together with
$T_R^{\mathrm{bad}}\le C_2 T^\alpha$ this yields
\[
\sum_{t\in\mathcal T_R^{\mathrm{bad}}} r_t
\le 2\sqrt{\beta_T}\,C_4
\sqrt{T_R^{\mathrm{bad}}\,\Psi_T(\mathcal X)}
\le C_3\,\sqrt{T^\alpha\,\beta_T\,\Psi_T(\mathcal X)},
\]
for an appropriate constant $C_3>0$.

\medskip
\textbf{(iii) Regret for LLM-fidelity rounds.}
For rounds $t\in\mathcal T_L$, the algorithm chooses the LLM-fidelity
oracle. The instantaneous regret $r_t = f^{\*}-f_R(x_t)$ is still
defined with respect to the real-fidelity target $f_R$, but the
query $(x_t,m_t=L)$ still contributes information to the KOH model:
it reduces the posterior variance of $f_L$, and hence (through the
relation $f_R=\rho f_L+\delta$) indirectly reduces $\Var_t[f_R]$ as
well.

To bound the cumulative regret over $\mathcal T_L$, we again work on
the event $\mathcal E$ and apply the UCB confidence interval at
the proposed point $x_t$:
\[
r_t = f^{\*} - f_R(x_t)
\le 2\sqrt{\beta_t}\,\sigma_{t-1}^H(x_t)
\le 2\sqrt{\beta_T}\,\sigma_{t-1}^H(x_t),
\]
for all $t\in\mathcal T_L$. Summing,
\[
\sum_{t\in\mathcal T_L} r_t
\le 2\sqrt{\beta_T}\sum_{t\in\mathcal T_L} \sigma_{t-1}^H(x_t).
\]

Let $A_L := \{x_t : t\in\mathcal T_L\}\subseteq\mathcal X$ be the
multiset of LLM-fidelity query points. The same information-gain
argument applied to these $T_L$ rounds (again over the full domain
$\mathcal X$) yields
\[
\sum_{t\in\mathcal T_L} \sigma_{t-1}^H(x_t)
\;\le\;
C_4\,\sqrt{T_L\,\Psi_T(\mathcal X)}.
\]
Therefore
\[
\sum_{t\in\mathcal T_L} r_t
\le 2\sqrt{\beta_T}\,C_4
\sqrt{T_L\,\Psi_T(\mathcal X)}
\le C_3\,\sqrt{T_L\,\beta_T\,\Psi_T(\mathcal X)},
\]
for a (possibly larger) constant $C_3>0$. This completes the proof.
\end{proof}

\begin{theorem}[Regret of multi-fidelity BO with KOH model and gating]\label{thm:main-regret}
Assume the Kennedy--O'Hagan model
\[
f_L \sim \mathcal{GP}(0,k_L),\qquad
\delta \sim \mathcal{GP}(0,k_\delta),\qquad
f_R(x) = \rho\,f_L(x) + \delta(x),
\]
with $f_L \perp\!\!\!\perp \delta$ and fixed $\rho$, and the
real-fidelity observation model
\[
y_t^H = f_R(x_t) + \varepsilon_t,
\]
where $(\varepsilon_t)_t$ are independent, zero-mean, $R$-sub-Gaussian
noises. Let $k_R(x,x') := \rho^2 k_L(x,x') + k_\delta(x,x')$ denote
the induced real-fidelity kernel.

At each round $t$, let $(\mu_{t-1}^H,\sigma_{t-1}^H)$ be the GP
posterior under $k_R$ conditioned on all past observations, and let
the candidate point be chosen by the real-fidelity UCB rule
\[
x_t \in \arg\max_{x\in\mathcal X}
\Bigl\{\mu_{t-1}^H(x) + \sqrt{\beta_t}\,\sigma_{t-1}^H(x)\Bigr\},
\]
with
\[
\beta_t = C_0\bigl(\log(1/\delta) + \gamma_{t-1}\bigr),
\]
for a suitable constant $C_0>0$, where $\gamma_t$ is the maximum
information gain associated with $k_R$.

The fidelity $m_t\in\{L,H\}$ is then selected by the two-gate policy
based on the mismatch ratio
\[
p_\Delta^{(t)}(x)
= \frac{\Var_t[\delta(x)]}{\Var_t[f_R(x)]}
= \frac{\Var_t[\delta(x)]}
       {\rho^2 \Var_t[f_L(x)] + \Var_t[\delta(x)]},
\]
namely,
\[
m_t = H
\quad\Longleftrightarrow\quad
p_\Delta^{(t)}(x_t) > \alpha
\ \text{or}\
p_\Delta^{(t)}(x_t) > 1-\frac{c_L}{c_R},
\]
for fixed $\alpha\in(0,1)$ and cost ratio $c_L/c_R\in(0,1)$;
otherwise $m_t = L$.
Let the dynamic and limiting real-fidelity regions
$\mathcal X_R^{(t)}$ and $\mathcal X_R^\star$ be defined as in
Definition~\ref{def:HF-regions}, and assume the gate stabilization
conditions of Lemma~\ref{lem:gate-stability} hold.

Define the instantaneous regret
\[
r_t := f^{\*} - f_R(x_t),\quad f^{\*} := \max_{x\in\mathcal X} f_R(x),
\]
and the cumulative regret $R_T := \sum_{t=1}^T r_t$.
Decompose the index sets as
\[
\mathcal T_R^\star := \{t\le T : m_t = H,\ x_t\in\mathcal X_R^\star\},
\]
\[
\mathcal T_R^{\mathrm{bad}} := \{t\le T : m_t = H,\ x_t\in\mathcal X_L^\star\},
\qquad
\mathcal T_L := \{t\le T : m_t = L\},
\]
and denote $T_R^\star := |\mathcal T_R^\star|$ and
$T_L := |\mathcal T_L|$.
Let $\Psi_T(\mathcal A)$ be the maximum mutual information of the
real-fidelity GP with kernel $k_R$ over $T$ points in
$\mathcal A\subseteq\mathcal X$.

Then, for any $\delta\in(0,1)$ and any fixed $\alpha\in(0,1)$, there
exists a constant $C>0$ such that, with probability at least $1-\delta$,
\[
R_T
\;\le\;
C_1
  \sqrt{T_R^\star\,\beta_T\,\Psi_T(\mathcal X_R^\star)}
  \;+\;
  C_2\sqrt{T^\alpha\,\beta_T\,\Psi_T(\mathcal X)}
  \;+\;
  C_3\sqrt{T_L\,\beta_T\,\Psi_T(\mathcal X)}.
\]
\end{theorem}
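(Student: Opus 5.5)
The plan is to assemble the bound from the three lemmas already established, working throughout on the high-probability event $\mathcal E$ of Lemma~\ref{lem:KOH-UCB}, which has $\mathbb P(\mathcal E)\ge 1-\delta$. All statements below are deterministic consequences of $\mathcal E$ together with the gate-stabilization hypotheses. Since every round $t\le T$ has $m_t\in\{L,H\}$, and for $m_t=H$ either $x_t\in\mathcal X_R^\star$ or $x_t\in\mathcal X_L^\star=\mathcal X\setminus\mathcal X_R^\star$, the index set $\{1,\dots,T\}$ is the disjoint union $\mathcal T_R^\star\cup\mathcal T_R^{\mathrm{bad}}\cup\mathcal T_L$. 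Hence
\[
R_T=\sum_{t\in\mathcal T_R^\star}r_t+\sum_{t\in\mathcal T_R^{\mathrm{bad}}}r_t+\sum_{t\in\mathcal T_L}r_t .
\]

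I would bound the three pieces in turn.

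\begin{enumerate}
\item The first sum is controlled directly by Lemma~\ref{lem:regret-HF-good}. That lemma relies only on the UCB instantaneous bound $r_t\le 2\sqrt{\beta_t}\,\sigma^H_{t-1}(x_t)$. This bound is valid because $x_t$ maximizes $\mu^H_{t-1}+\sqrt{\beta_t}\sigma^H_{t-1}$, so $f^\ast\le \mathrm{UCB}_t(x^\ast)\le \mathrm{UCB}_t(x_t)$ on $\mathcal E$. The lemma also uses the width-sum bound of Lemma~\ref{lem:width_sum} restricted to points in $\mathcal X_R^\star$. This yields $C_1\sqrt{T_R^\star\beta_T\Psi_T(\mathcal X_R^\star)}$.
\item For the second sum, I would invoke Lemma~\ref{lem:regret-bad-and-LF}(i)--(ii), which under Lemma~\ref{lem:gate-stability} gives $T_R^{\mathrm{bad}}\le C_2T^\alpha$ and hence a contribution of $C_2\sqrt{T^\alpha\beta_T\Psi_T(\mathcal X)}$, after renaming constants.
\item For the third sum, Lemma~\ref{lem:regret-bad-and-LF}(iii) gives $C_3\sqrt{T_L\beta_T\Psi_T(\mathcal X)}$.
\end{enumerate}

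Adding the three bounds and taking $C$ as the maximum of the constants proves the theorem. I would also use $\beta_t\le\beta_T$, since $\gamma_{t-1}$ is nondecreasing.

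The only delicate points are bookkeeping, and I would handle them explicitly. First, the posterior $\sigma^H_{t-1}$ is conditioned on all past observations, both LLM- and real-fidelity. Applying the width-sum bound separately on each sub-index set therefore needs monotonicity of posterior variance under extra conditioning: each $\sigma^H_{t-1}(x_t)$ is at most the standard deviation obtained by conditioning only on earlier rounds in the same subset. After that, Lemma~\ref{lem:width_sum} applies to each subsequence, with its information gain bounded by $\Psi_T$ of the relevant region. Second, the information gain of LLM-fidelity observations on $f_R$ must be dominated by $\Psi_T(\mathcal X)$. I would treat LLM observations as noisy linear observations of $f_L$ and appeal to the fact that conditioning reduces variance, so the $k_R$-based bound remains an upper bound.

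I expect the main obstacle to be the sublinear count $T_R^{\mathrm{bad}}\le C_2T^\alpha$. Lemma~\ref{lem:regret-bad-and-LF}(i) asserts this only via a heuristic counting argument, and the exponent there coincides with the gate threshold $\alpha$, which is a notational clash. In the finite-$\mathcal X$ setting of Lemma~\ref{lem:gate-stability}, I would first try the cleaner route. After $T_0=\max_x T_x$, no real-fidelity pull can occur in $\mathcal X_L^\star$, so $T_R^{\mathrm{bad}}\le T_0$. Here $T_0$ is finite almost surely, though random, and this is trivially $O(T^\alpha)$ for any exponent in $(0,1)$ on the event $\{T_0\le C_2\}$. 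I would then absorb the probability of $T_0$ being large into $\delta$, or state the bound with a random constant.
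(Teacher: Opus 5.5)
\textbf{Overall.} Your assembly is exactly the paper's proof: the disjoint split $\{1,\dots,T\}=\mathcal T_R^\star\cup\mathcal T_R^{\mathrm{bad}}\cup\mathcal T_L$, the instantaneous bound $r_t\le 2\sqrt{\beta_t}\,\sigma^H_{t-1}(x_t)$, and summing the three lemma bounds. Your monotonicity argument for applying Lemma~\ref{lem:width_sum} to the two real-fidelity subsequences is correct. Your replacement for Lemma~\ref{lem:regret-bad-and-LF}(i) is also sound, and stronger than the paper's heuristic counting. Stabilization gives $T_R^{\mathrm{bad}}\le T_0<\infty$ a.s. Taking $C_2=C_2(\delta)$ with $\mathbb P(T_0>C_2)\le\delta/2$ and running $\mathcal E$ at level $\delta/2$ then yields $T_R^{\mathrm{bad}}=O(1)$. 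The price is a non-explicit, instance-dependent constant.

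\textbf{The gap: the LLM-round term.} Your two ingredients (conditioning reduces variance, and LLM outputs are linear observations of $f_L$) do not give $\sum_{t\in\mathcal T_L}\sigma^H_{t-1}(x_t)\lesssim\sqrt{T_L\Psi_T(\mathcal X)}$. The width-sum lemma needs the round-$t$ observation to shrink the very variance being summed. An observation of $f_L(x_t)$ need not shrink the discrepancy part of $\Var_{t-1}[f_R(x_t)]$, and dropping observations only enlarges $\sigma^H_{t-1}$. Concretely, with no real-fidelity data, $\Var_{t-1}[f_R(x_t)]\ge k_\delta(x_t,x_t)$ on every LLM round, so the sum can grow linearly in $T_L$. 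Any argument that never uses how $m_t$ was chosen therefore cannot work. The paper's proof of Lemma~\ref{lem:regret-bad-and-LF}(iii) makes the same unjustified leap, so citing it does not close the hole.

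The missing ingredient is the gate. $m_t=L$ means $p_\Delta(x_t)\le\alpha':=\min\{\alpha,1-c_L/c_R\}$. Taking all variances under the posterior used at round $t$, this gives
\[
\bigl(\sigma^H_{t-1}(x_t)\bigr)^2\le\Bigl(|\rho|\sqrt{\Var_{t-1}[f_L(x_t)]}+\sqrt{\Var_{t-1}[\delta(x_t)]}\Bigr)^2\le\frac{2\rho^2}{1-\alpha'}\,\Var_{t-1}[f_L(x_t)].
\]
Then:
\begin{enumerate}
\item drop the real-fidelity data, using monotonicity;
\item apply Lemma~\ref{lem:width_sum} to the $k_L$-GP with LLM noise level $\sigma_L^2$ along $\{x_t\}_{t\in\mathcal T_L}$;
\item compare information gains via $\rho^2K_L\preceq K_R$ and $\log(1+cu)\le\max\{1,c\}\log(1+u)$.
\end{enumerate}
Step 3 bounds the $k_L$ information gain by $\max\{1,\sigma^2/(\rho^2\sigma_L^2)\}\,\Psi_T(\mathcal X)$, with $\sigma^2$ the real-fidelity noise variance. (If $\rho=0$, the gate never selects $L$.) The resulting $C_3$ carries a factor $(1-\alpha')^{-1/2}$, which blows up as the gate becomes permissive.

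\textbf{A related gap: the event $\mathcal E$.} You work on $\mathcal E$ from Lemma~\ref{lem:KOH-UCB}, but your $\mu^H_{t-1},\sigma^H_{t-1}$ are conditioned on LLM outputs too. That lemma is proved only for the posterior given the real-fidelity pairs $(x_s,y^H_s)$, and the theorem posits no model for the LLM outputs; the paper's proof has the same mismatch. If the LLM outputs are arbitrary, they can shift $\mu^H_{t-1}$ while shrinking $\sigma^H_{t-1}$. The bound $r_t\le 2\sqrt{\beta_t}\,\sigma^H_{t-1}(x_t)$ then fails on all three index sets.

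You must add an assumption such as $y^L_t=f_L(x_t)+\varepsilon^L_t$, with noise of the modelled variance, and re-derive $\mathcal E$ for the joint posterior. For example, with Gaussian noise on both fidelities, conditionally on the history one has $f_R(x)\sim\mathcal N\bigl(\mu^H_{t-1}(x),(\sigma^H_{t-1}(x))^2\bigr)$. The union bound over the finite $\mathcal X$ as in Srinivas et al.\ then gives $\beta_t=2\log\bigl(|\mathcal X|\pi^2t^2/(6\delta)\bigr)$.
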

\begin{proof}
We work on the high-probability event $\mathcal E$ of
Lemma~\ref{lem:KOH-UCB}, on which the GP-UCB confidence bounds hold
for all $x\in\mathcal X$ and all $t\le T$.

By Definition~\ref{def:HF-regions} and Lemma~\ref{lem:gate-stability},
the dynamic real-fidelity regions $\mathcal X_R^{(t)}$ stabilize to
the limiting region $\mathcal X_R^\star$ after a finite time. We
decompose the index set $\{1,\dots,T\}$ into three disjoint subsets:
\[
\mathcal T_R^\star,\quad
\mathcal T_R^{\mathrm{bad}},\quad
\mathcal T_L,
\]
corresponding respectively to real-fidelity pulls in
$\mathcal X_R^\star$, real-fidelity pulls in $\mathcal X_L^\star$,
and LLM-fidelity rounds.

Lemma~\ref{lem:regret-HF-good} shows that, on $\mathcal E$,
\[
\sum_{t\in\mathcal T_R^\star} r_t
\;\le\;
C_1\sqrt{T_R^\star\,\beta_T\,\Psi_T(\mathcal X_R^\star)}.
\]
Lemma~\ref{lem:regret-bad-and-LF} establishes that, for any fixed
$\alpha\in(0,1)$,
\[
T_R^{\mathrm{bad}} \le C_2 T^\alpha,\qquad
\sum_{t\in\mathcal T_R^{\mathrm{bad}}} r_t
\;\le\;
C_3\sqrt{T^\alpha\,\beta_T\,\Psi_T(\mathcal X)},
\]
and
\[
\sum_{t\in\mathcal T_L} r_t
\;\le\;
C_3\sqrt{T_L\,\beta_T\,\Psi_T(\mathcal X)}.
\]
Summing these three contributions and absorbing constants yields the
stated bound on $R_T$. Using $T_R^\star\le T$ and $T_L\le T$ and the
standard growth rates of $\beta_T$ and $\Psi_T(\cdot)$ gives the
simplified $\tilde{\mathcal O}(\cdot)$ form.
\end{proof}
\begin{remark}[Comparison with MF-GP-UCB]
Classical MF-GP-UCB relies on the assumption that the LLM-fidelity
function is uniformly informative through the $\zeta$-bounded
difference condition. Such an assumption often requires a very large $\zeta$, making the bound extremely loose, and is thus frequently violated in practice.  In modern
settings where the LLM-fidelity source is a large language model: its
predictions may exhibit large and highly nonstationary variance across
the domain. In contrast, our gating criterion
$p_\Delta^{(t)}(x)$ identifies regions where the KOH residual
dominates and automatically isolates a stable real-fidelity region
$\mathcal X_R^\star$. As a consequence, the algorithm does not rely on
a global fidelity ordering and adapts seamlessly to high-variance,
heterogeneous fidelity sources such as LLMs.
\end{remark}

\begin{remark}[Advantage over standard GP-UCB]
Standard GP-UCB explores the entire domain under a single real-fidelity
model and therefore suffers regret proportional to the information
capacity $\Psi_T(\mathcal X)$ of the full search space. In our method,
the gating mechanism rapidly filters out regions where LLM-fidelity
signals are uninformative and where real-fidelity evaluations offer no
meaningful variance reduction. This yields a significantly reduced
effective search region $\mathcal X_R^\star$, leading to regret
controlled by $\Psi_T(\mathcal X_R^\star)\ll \Psi_T(\mathcal X)$ and
dramatically shrinking exploration in suboptimal or ``bad'' areas.
\end{remark}

\section{Prompt}\label{appendix:B}

To ensure reproducibility, transparency, and adaptability, we design the prompt for LABO in a modular fashion, consisting of a \textbf{fixed system prompt} and a \textbf{dynamic user prompt}. The \textbf{system prompt} is invariant across tasks and encodes the core scientific principles and output rules that the LLM must follow, while the \textbf{user prompt} is flexible and task-specific, providing experiment-specific background and historical context. This separation allows LABO to (i) maintain consistent reasoning discipline, (ii) incorporate domain knowledge explicitly, and (iii) adapt flexibly to different experimental datasets.

\subsection*{System Prompt}

The \textbf{system prompt} is fixed and instructs the LLM to respond strictly in valid JSON format, without any additional commentary or markdown. The format of the response is rigidly defined to ensure consistency across tasks.

\begin{lstlisting}
SYSTEM_PROMPT = """You are a scientific assistant supporting Bayesian Optimization studies.
- Output ONLY valid JSON. No markdown, no explanations, no comments, no thinking process.
- Respond immediately with the required JSON format.
- Always keep feature order as provided and clip predictions to their valid numeric bounds.
- Treat historical measurements as read-only context; never copy them back into the output."""
\end{lstlisting}

\subsection*{User Prompt}

The \textbf{user prompt} is dynamically structured and provides the LLM with the necessary background, parameters, goal, and historical data. The LLM is instructed to predict future data points based on the current context. The output format is strict, and predictions must be returned in the specified JSON schema.

\begin{lstlisting}
Background:
We are conducting a Bayesian Optimization task to optimize a set of experimental conditions. The goal is to maximize a certain outcome, based on the observed relationship between input features and the target. The optimization is guided by a surrogate model that balances exploration and exploitation based on previous observations. The task involves selecting new points to evaluate based on the surrogate model's predictions.

Parameters:
- Experimental Conditions: [x1, x2, x3] (These represent the input features or parameters that influence the outcome of the experiments)
- Objective Function: The objective is to predict the outcome (y) based on the above conditions.

Goal:
Maximize the target outcome by optimizing the experimental conditions. The objective function is costly to evaluate, and the aim is to focus on areas with high potential, based on prior data and the surrogate model's predictions.

Historical Data (read-only context):
{
    "data_points": [
        {
            "features": [0.1, 0.2, 0.3],
            "target": 0.95
        },
        {
            "features": [0.4, 0.5, 0.6],
            "target": 0.87
        },
        ...
    ]
}

Predict These Points (return predictions in the same order):
{
    "data_points": [
        {
            "features": [x1, x2, x3]
        },
        ...
    ]
}

REQUIRED OUTPUT FORMAT (exact schema):
{
    "data_points": [
        {
            "features": [x1, x2, x3],
            "target": y
        },
        {
            "features": [x1, x2, x3],
            "target": y
        }
    ]
}

CRITICAL FORMATTING REQUIREMENTS:
1. Output ONLY the JSON object. No text before or after. No explanations. No markdown formatting.
2. Response must be valid JSON with double quotes, no comments, and no trailing commas.
3. The root object must contain exactly one key: "data_points" (an array).
4. The "data_points" array must contain exactly one entry for each input point, in the same order as provided.
5. Each entry must be an object with exactly two keys:
   - "features": an array of numbers in the same order as the provided input features
   - "target": a single number (the predicted f value, rounded to 3 decimals, clipped to [$y_min$,$y_max$])
6. Do not include explanations, markdown, historical records, or any other keys or text outside the JSON object.
7. The number of entries in "data_points" must exactly match the number of input points provided.

\end{lstlisting}

This format ensures a clear, structured approach to designing prompts for LABO, maintaining both consistency and flexibility across different experimental setups. The system prompt ensures consistency in output, while the user prompt adapts to the specifics of the optimization task, allowing LABO to efficiently guide BO using the LLM.

\section{Experiments}\label{appendix:C}
\subsection{Datasets}

This section provides detailed descriptions of the six experiments used in this study. Each task focuses on the optimization of a particular system or material, with the goal of improving performance based on various parameters. 

\paragraph{Sandwich~\cite{Sandwich}.} 
This task aims to optimize the ingredient quantities for a sandwich, where the goal is to maximize the \textbf{Total\_Score}, which combines two components: \textbf{Total\_HEI\_Score} and \textbf{Calorie\_Score}. The \textbf{Total\_HEI\_Score} evaluates nutritional quality based on the HEI-2020 system, and the \textbf{Calorie\_Score} rewards the sandwich's energy content being near 1200 kcal. The optimization involves balancing the adequacy and moderation components, considering the trade-offs between ingredients that contribute positively or negatively to the HEI and calorie targets.

\begin{table}[H]
\centering
\small
\begin{tabular}{|l|l|l|l|l|}
\hline
\textbf{Input/Output} & \textbf{Variable Name}      & \textbf{Range}         & \textbf{Description}                                        & \textbf{Type}        \\ \hline
\textbf{Inputs}        & multigrain\_bread           & [0.0, 140.0]           & Amount of multigrain bread (g)                              & Continuous           \\ \hline
                       & whole\_wheat\_bread         & [0.0, 140.0]           & Amount of whole wheat bread (g)                             & Continuous           \\ \hline
                       & sourdough\_bread            & [0.0, 140.0]           & Amount of sourdough bread (g)                               & Continuous           \\ \hline
                       & chicken\_protein            & [0.0, 100.0]           & Amount of chicken protein (g)                               & Continuous           \\ \hline
                       & tuna\_protein               & [0.0, 100.0]           & Amount of tuna protein (g)                                  & Continuous           \\ \hline
                       & tofu\_protein               & [0.0, 80.0]            & Amount of tofu protein (g)                                  & Continuous           \\ \hline
                       & hummus\_protein             & [0.0, 70.0]            & Amount of hummus protein (g)                                & Continuous           \\ \hline
                       & egg\_protein                & [0.0, 80.0]            & Amount of egg protein (g)                                   & Continuous           \\ \hline
                       & low\_fat\_cheese\_dairy     & [0.0, 20.0]            & Amount of Low-fat cheese dairy (g)                          & Continuous           \\ \hline
                       & cheddar\_cheese             & [0.0, 20.0]            & Amount of cheddar cheese (g)                                & Continuous           \\ \hline
                       & swiss\_cheese\_dairy       & [0.0, 20.0]            & Amount of Swiss cheese dairy (g)                            & Continuous           \\ \hline
                       & collards                     & [0.0, 80.0]            & Amount of collards (g)                                      & Continuous           \\ \hline
                       & cabbage                      & [0.0, 80.0]            & Amount of cabbage (g)                                       & Continuous           \\ \hline
                       & onion\_vegetables            & [0.0, 80.0]            & Amount of onion vegetables (g)                              & Continuous           \\ \hline
                       & tomato\_vegetables           & [0.0, 80.0]            & Amount of tomato vegetables (g)                             & Continuous           \\ \hline
                       & mayo\_sauce                 & [0.0, 15.0]            & Amount of mayonnaise sauce (g)                              & Continuous           \\ \hline
                       & olive\_oil                  & [0.0, 20.0]            & Amount of olive oil (g)                                     & Continuous           \\ \hline
                       & apples                       & [0.0, 100.0]           & Amount of apples (g)                                        & Continuous           \\ \hline
                       & orange                       & [0.0, 100.0]           & Amount of orange (g)                                        & Continuous           \\ \hline
                       & banana                       & [0.0, 100.0]           & Amount of banana (g)                                        & Continuous           \\ \hline
\textbf{Output}        & Total\_Score                 & [0, ~200]              & Score combining HEI and Calorie scores   & Continuous           \\ \hline
\end{tabular}
\caption{Input and Output variables for Sandwich task.}
\end{table}

\paragraph{Fullerene~\cite{Fullerene}.}
This task optimizes a cascadic reaction forming o-xylenyl adducts of C60. The goal is to maximize the \textbf{mole\_fraction} of useful adducts (X1 + X2), which directly impacts product selectivity and yield. The task involves tuning three key reaction parameters: reaction time, sultine concentration, and temperature. Parameters are all real-valued in physical units (not normalized). The dataset used for optimization includes 246 sets of data, which provide a strong dataset to guide predictions, exhibiting nonlinear dependencies on reaction conditions.

\begin{table}[H]
\centering
\small
\begin{tabular}{|l|l|l|l|l|}
\hline
\textbf{Input/Output} & \textbf{Variable Name}      & \textbf{Range}         & \textbf{Description}                                        & \textbf{Type}        \\ \hline
\textbf{Inputs}        & reaction\_time              & [3, 31] (min)          & Duration of the reaction (minutes)                          & Continuous           \\ \hline
                       & sultine\_conc               & [1.5, 6.0]
                       & Concentration of sultine reactant                          & Continuous           \\ \hline
                       & temperature                  & [100, 150] (°C)        & Temperature                   & Continuous           \\ \hline
\textbf{Output}        & mole\_fraction               & [0.0, 1.0]            & The mole fraction of useful adducts               & Continuous           \\ \hline
\end{tabular}
\caption{Input and Output variables for Fullerene task.}
\end{table}

\paragraph{PCE10~\cite{PCE10}.}
This task optimizes the composition of quaternary organic photovoltaic (OPV) active-layer blends to minimize photodegradation, a critical metric that measures photostability after light exposure. The goal is to minimize degradation, which directly influences device lifetime and performance retention. The task involves tuning four key compositional parameters: PCE10, P3HT, PCBM, and olDTBR. The dataset used for this task includes 1040 measurements and exhibits strong nonlinear dependencies between the parameters, with degradation being dominated by acceptor-acceptor interactions rather than simple linear effects.

\begin{table}[H]
\centering
\small
\begin{tabular}{|l|l|l|l|l|}
\hline
\textbf{Input/Output} & \textbf{Variable Name}      & \textbf{Range}         & \textbf{Description}                                        & \textbf{Type}        \\ \hline
\textbf{Inputs}        & PCE10                       & [0.00, 1.00]           & Weight fraction of PCE10 polymer donor  & Continuous           \\ \hline
                       & P3HT                        & [0.00, 1.00]           & Weight fraction of P3HT polymer donor    & Continuous           \\ \hline
                       & PCBM                        & [0.00, 1.00]           & Weight fraction of PCBM fullerene acceptor                   & Continuous           \\ \hline
                       & olDTBR                      & [0.00, 1.00]           & Weight fraction of olDTBR non-fullerene acceptor              & Continuous           \\ \hline
\textbf{Output}        & degradation                 & [0.0, 0.75]            & The degradation metric (lower is better)                     & Continuous           \\ \hline
\end{tabular}
\caption{Input and Output variables for PCE10 task.}
\end{table}

\paragraph{COF~\cite{COF}.}
This task optimizes Covalent Organic Framework materials for efficient Xe/Kr gas separation by tuning 14 key structural and compositional parameters. The objective is to maximize gcmc\_y, the Xe/Kr adsorption selectivity derived from Grand Canonical Monte Carlo (GCMC) real-fidelity simulations. A gcmc\_y value greater than 1 indicates that the material is more selective for Xe over Kr, with higher values indicating superior separation performance. The dataset used for this task contains 608 entries, with data showing that smaller pores, lower void fraction, and higher crystalline density are associated with higher gcmc\_y values.

\begin{table}[H]
\centering
\small
\begin{tabular}{|l|l|l|l|l|}
\hline
\textbf{Input/Output} & \textbf{Variable Name}      & \textbf{Range}         & \textbf{Description}                                        & \textbf{Type}        \\ \hline
\textbf{Inputs}        & pore\_diameter              & [3.5094, 56.3986] (Å)  & Equivalent pore diameter (Å)                               & Continuous           \\ \hline
                       & void\_fraction               & [0.1636, 0.9278]       & Porosity 
                       & Continuous           \\ \hline
                       & surface\_area                & [1996.63, 6357.01] (m²/g) & Specific surface area (m²/g)                                & Continuous           \\ \hline
                       & crystal\_density             & [102.7248, 1610.7018] (kg/m³) & Crystalline density (kg/m³)                                & Continuous           \\ \hline
                       & B                            & [0.0000, 0.1818] (atomic fraction) & Boron atomic fraction                                    & Continuous           \\ \hline
                       & O                            & [0.0000, 0.2500] (atomic fraction) & Oxygen atomic fraction                                    & Continuous           \\ \hline
                       & C                            & [0.3250, 0.6667] (atomic fraction) & Carbon atomic fraction                                    & Continuous           \\ \hline
                       & H                            & [0.0000, 0.5000] (atomic fraction) & Hydrogen atomic fraction                                  & Continuous           \\ \hline
                       & Si                           & [0.0000, 0.0295] (atomic fraction) & Silicon atomic fraction                                   & Continuous           \\ \hline
                       & N                            & [0.0000, 0.3333] (atomic fraction) & Nitrogen atomic fraction                                  & Continuous           \\ \hline
                       & S                            & [0.0000, 0.1429] (atomic fraction) & Sulfur atomic fraction                                    & Continuous           \\ \hline
                       & P                            & [0.0000, 0.0667] (atomic fraction) & Phosphorus atomic fraction                                & Continuous           \\ \hline
                       & halogens                     & [0.0000, 0.2857] (atomic fraction) & Combined halogens  & Continuous           \\ \hline
                       & metals                       & [0.0000, 0.0238] (atomic fraction) & Combined metal content                                    & Continuous           \\ \hline
\textbf{Output}        & gcmc\_y                      & [0.0, ~5.0]            & Xe/Kr adsorption selectivity         & Continuous           \\ \hline
\end{tabular}
\caption{Input and Output variables for COF task.}
\end{table}

\paragraph{Flow Battery~\cite{FeCr}.}
This task optimizes the composition of an iron-chromium liquid flow battery with 4000 water molecules as the base. The parameters Fe, Cr, and H represent the number of ions added to this aqueous base. The objective is to maximize the comprehensive performance indicator \( f \), which balances conductivity, viscosity, and ionic concentration effects. The optimization must carefully manage trade-offs: higher H and lower Fe can boost conductivity, but may hurt the overall balance, while lower Cr can reduce viscosity but risks underperformance at very low Cr levels. The dataset contains 615 data points, which will be used to guide the optimization process.

\begin{table}[H]
\centering
\small
\begin{tabular}{|l|l|l|l|l|}
\hline
\textbf{Input/Output} & \textbf{Variable Name}      & \textbf{Range}         & \textbf{Description}                                        & \textbf{Type}        \\ \hline
\textbf{Inputs}        & Fe\_particle\_number       & [50, 145] (integers)   & Number of Fe particles                              & Discrete (Integer)   \\ \hline
                       & Cr\_particle\_number      & [55, 145] (integers)   & Number of Cr particles                               & Discrete (Integer)   \\ \hline
                       & H\_particle\_number       & [109, 289] (integers)  & Number of H particles                                & Discrete (Integer)   \\ \hline
\textbf{Output}        & performance\_indicator\_f  & [0, ~1]                & Performance indicator \( f \) & Continuous           \\ \hline
\end{tabular}
\caption{Input and Output variables for Flow Battery task.}
\end{table}

\paragraph{P3HT~\cite{P3HT}.}
This task optimizes the composition of drop-cast P3HT/CNT composite thin films for electrical conductivity by tuning five key compositional parameters. The objective is to maximize \textbf{electrical conductivity}, a critical metric for electronic device performance that directly influences charge transport efficiency and device functionality. Data-driven guidance from this dataset suggests that the highest conductivity occurs with specific combinations of the polymer donor (P3HT) and carbon nanotubes (CNTs). The dataset used for this task contains 233 data points, which will guide the optimization process by exploring various compositions and their effects on conductivity.

\begin{table}[H]
\centering
\small
\begin{tabular}{|l|l|l|l|l|}
\hline
\textbf{Input/Output} & \textbf{Variable Name}      & \textbf{Range}         & \textbf{Description}                                        & \textbf{Type}        \\ \hline
\textbf{Inputs}        & p3ht\_content               & [10, 100] (\%)         & rr-P3HT polymer content in composite film                   & Continuous           \\ \hline
                       & d1\_content                 & [0, 60] (\%)           & l-SWNT carbon nanotube content                  & Continuous           \\ \hline
                       & d2\_content                 & [0, 70] (\%)           & s-SWNT carbon nanotube content                 & Continuous           \\ \hline
                       & d6\_content                 & [0, 85] (\%)           & MWCNT carbon nanotube content              & Continuous           \\ \hline
                       & d8\_content                 & [0, 75] (\%)           & DWCNT carbon nanotube content              & Continuous           \\ \hline
\textbf{Output}        & electrical\_conductivity     & [0, ~1]                & Electrical conductivity               & Continuous           \\ \hline
\end{tabular}
\caption{Input and Output variables for P3HT task.}
\end{table}

\subsection{More Results}
In this section we provide more scatter diagrams in the following datasets. As shown in Figure~\ref{fig:app_1}--\ref{fig:app_5}, LLM-fidelity queries cover the input space broadly, while real-fidelity evaluations are concentrated in a few subregions. This allocation reflects LABO’s design: a global LLM-fidelity model provides inexpensive, coarse guidance, while a discrepancy correction targets areas with high uncertainty. The gating criterion enforces this separation, focusing real-fidelity queries where LLM predictions lack reliability. As predicted by our theoretical analysis, this leads to a much smaller effective exploration region $\mathcal{X}_R^\ast$ and improved sample efficiency.

\begin{figure*}
    \centering
    \begin{subfigure}{0.48\textwidth}

        \includegraphics[width=\linewidth]{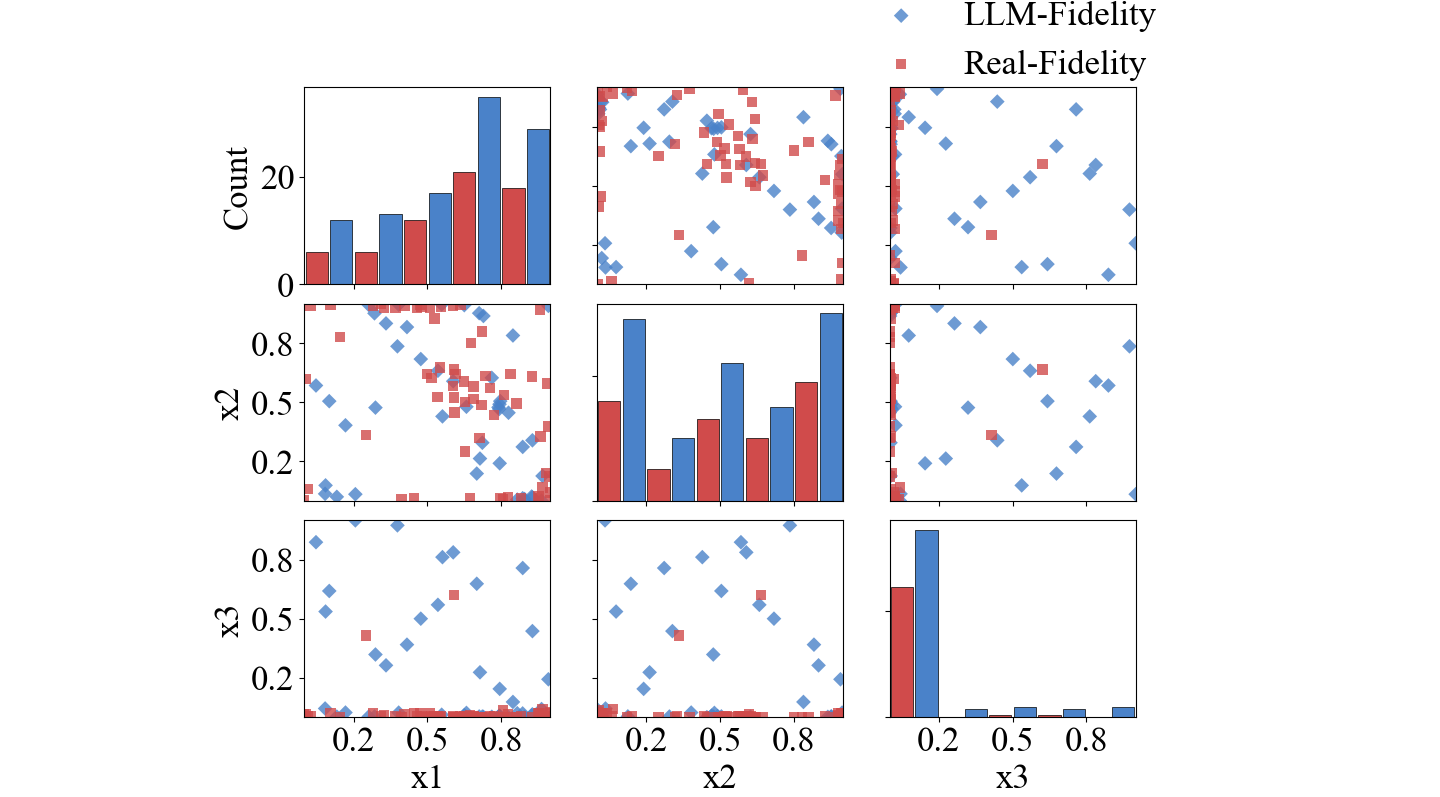}
    \end{subfigure}
    \begin{subfigure}{0.48\textwidth}

        \includegraphics[width=\linewidth]{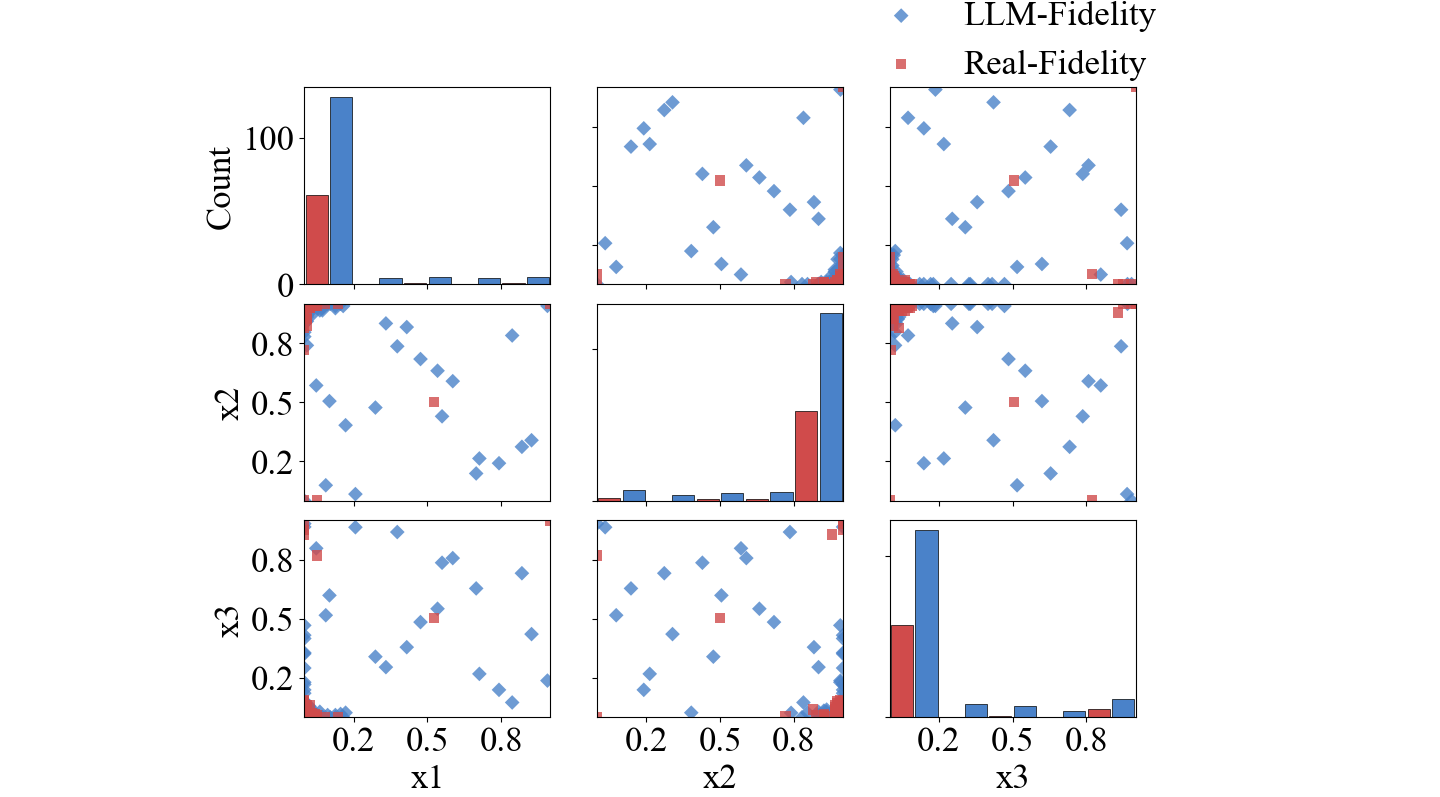}
    \end{subfigure}
    \caption{Distribution of LLM- and real-fidelity samples on the
Fullerene and flowbattery task. The first three normalized input dimensions are shown.}
    \label{fig:app_1}
\end{figure*}
\begin{figure*}
    \centering
    \begin{subfigure}{0.48\textwidth}

        \includegraphics[width=\linewidth]{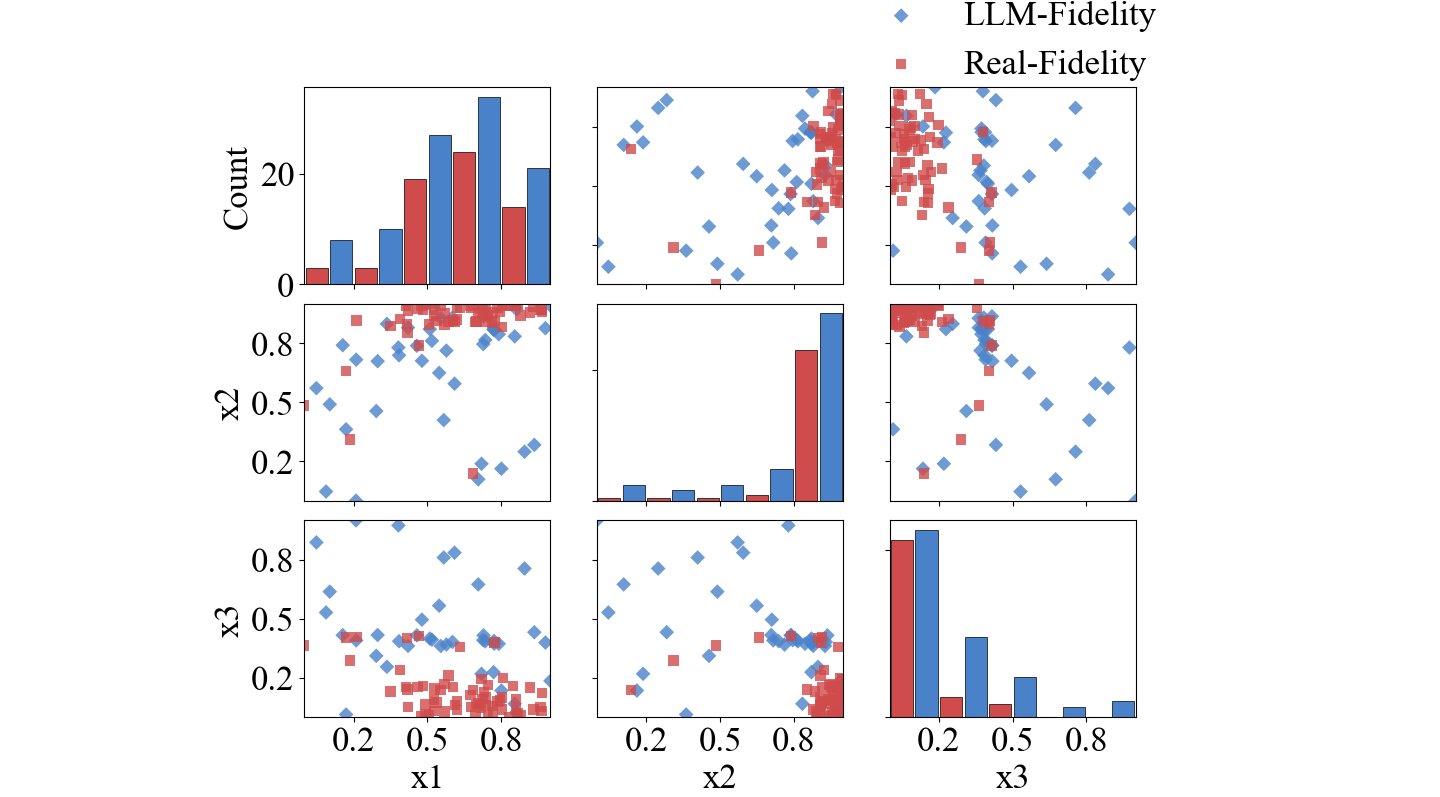}
    \end{subfigure}
    \begin{subfigure}{0.48\textwidth}

        \includegraphics[width=\linewidth]{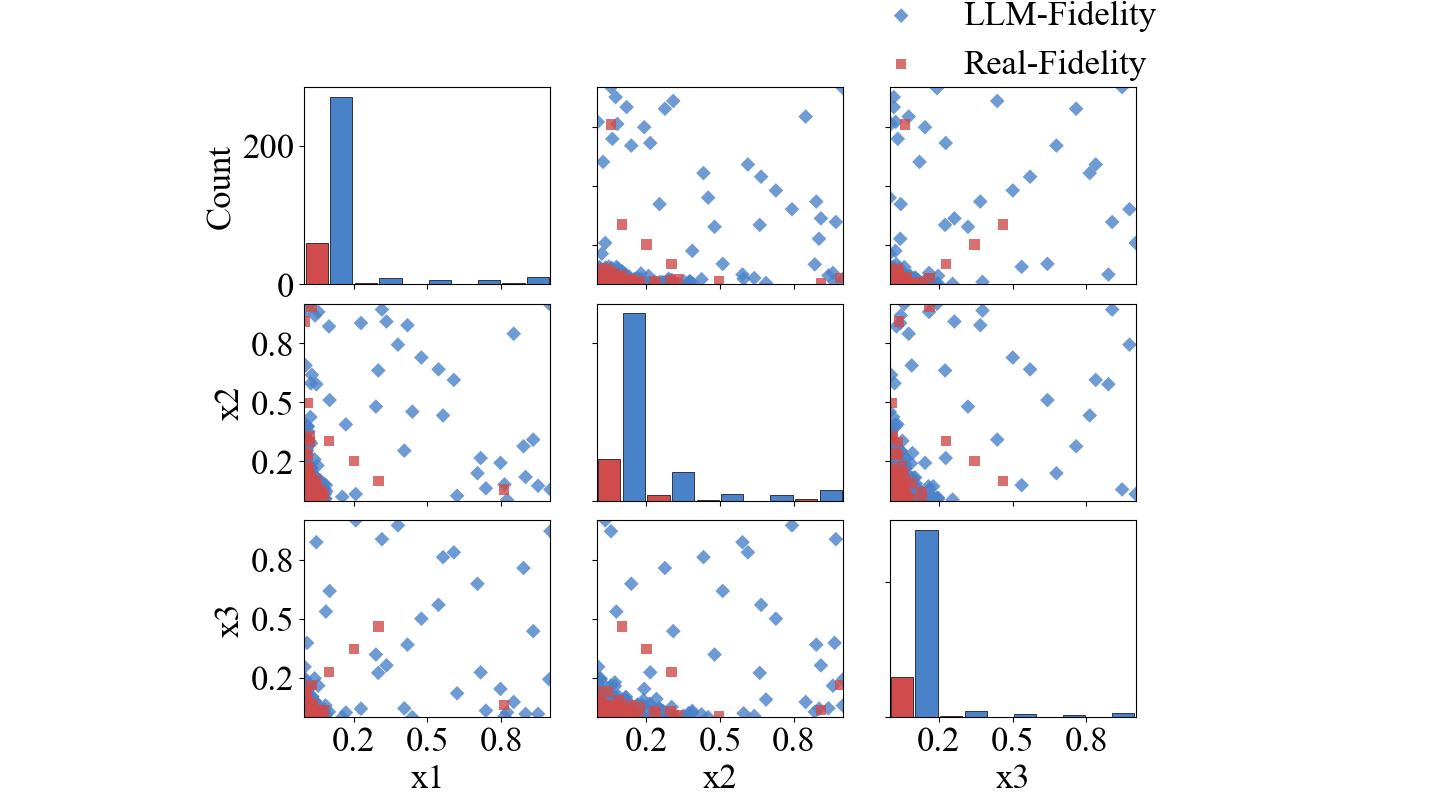}
    \end{subfigure}
    \caption{Distribution of LLM- and real-fidelity samples on the
P3HT and PCE10 task. The first three normalized input dimensions are shown.}
    \label{fig:app_3}
\end{figure*}

\begin{figure*}
    \centering
    \begin{subfigure}{0.48\textwidth}

        \includegraphics[width=\linewidth]{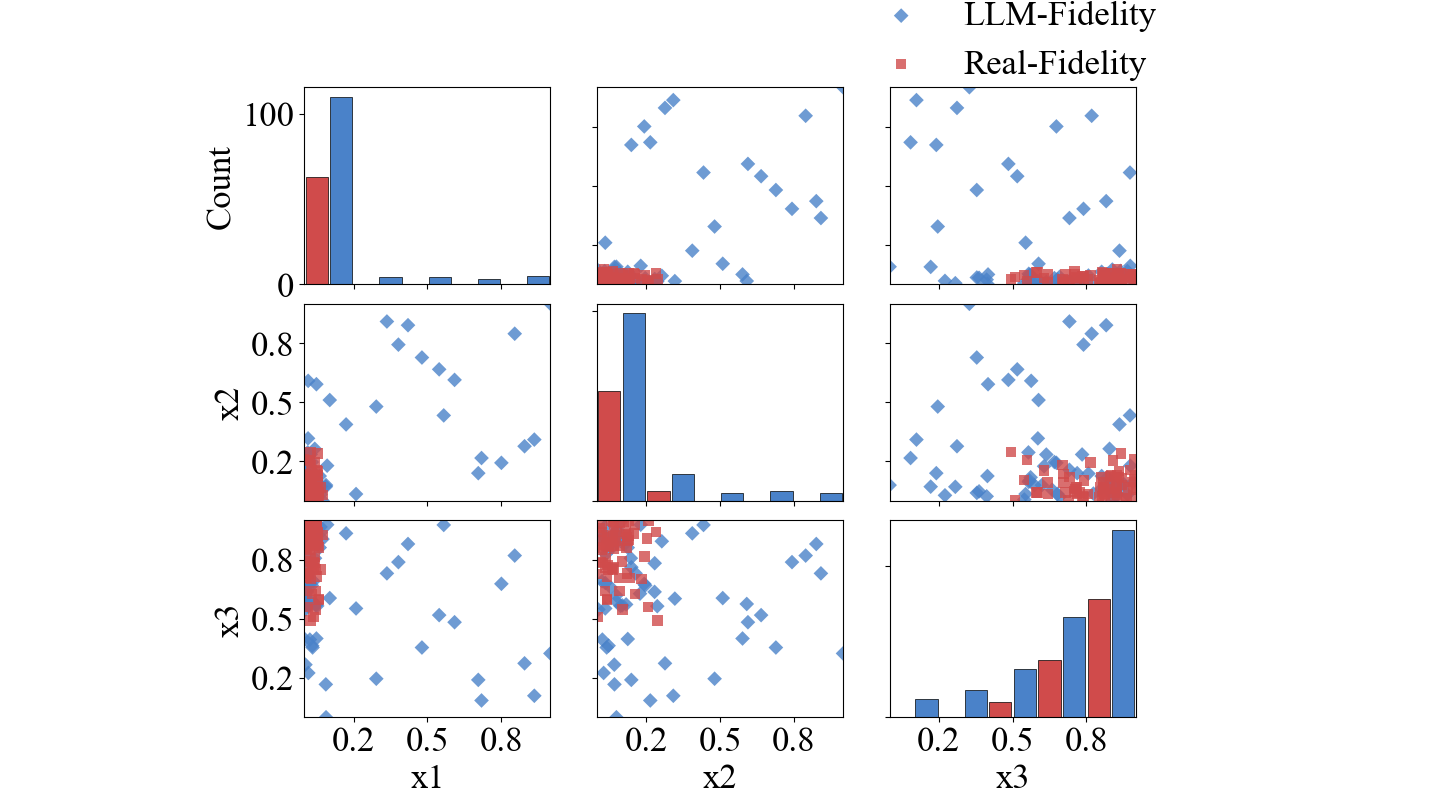}
    \end{subfigure}
    \begin{subfigure}{0.33\textwidth}

        \includegraphics[width=\linewidth]{Fig/scatter.png}
    \end{subfigure}
    \caption{Distribution of LLM- and real-fidelity samples on the
Sandwich and COF task. The first three normalized input dimensions are shown.}
    \label{fig:app_5}
\end{figure*}

\subsection{Additional Experimental Results}
\label{appendix:additional_results}

In this section, we report additional experimental results that evaluate LABO from several complementary perspectives, including AutoML hyperparameter optimization, high-dimensional scientific optimization, comparison with LLM-based Thompson sampling, cost-benefit analysis, and empirical alignment between LLM-fidelity and real-fidelity observations.

\subsubsection{AutoML Hyperparameter Optimization}

To evaluate whether LABO is also effective beyond scientific formulation tasks, we add two AutoML hyperparameter optimization benchmarks from HPOBench~\citep{HPOBench}: SVM and MLP. The SVM task has a 2-dimensional search space and the MLP task has a 5-dimensional search space. Both tasks are evaluated for 80 optimization iterations. As shown in Table~\ref{tab:automl_results}, LABO achieves the best final objective value and the fastest convergence on both tasks, demonstrating that the proposed LLM-fidelity guidance and gating mechanism are also effective for machine learning optimization problems.

\begin{table*}[!htbp]
\centering
\caption{Additional AutoML results on HPOBench. We report the final objective value and the number of iterations required to reach 90\% of the best value. Results are reported as mean $\pm$ standard deviation over five runs.}
\label{tab:automl_results}
\small
\setlength{\tabcolsep}{5pt}
\begin{tabular}{lcccc}
\toprule
Method & Final Obj. $\uparrow$ (SVM) & Iters to 90\% $\downarrow$ (SVM) & Final Obj. $\uparrow$ (MLP) & Iters to 90\% $\downarrow$ (MLP) \\
\midrule
LABO       & \textbf{0.909 $\pm$ 0.016} & \textbf{33.8 $\pm$ 18.8} & \textbf{0.941 $\pm$ 0.009} & \textbf{30.0 $\pm$ 10.2} \\
LLAMBO     & 0.892 $\pm$ 0.023 & 42.0 $\pm$ 14.5 & 0.925 $\pm$ 0.006 & 50.2 $\pm$ 20.8 \\
BOPRO      & 0.898 $\pm$ 0.019 & 60.2 $\pm$ 9.6  & 0.934 $\pm$ 0.013 & 47.2 $\pm$ 24.9 \\
CAKE       & 0.904 $\pm$ 0.014 & 50.0 $\pm$ 24.5 & 0.939 $\pm$ 0.016 & 52.2 $\pm$ 15.0 \\
Vanilla BO & 0.880 $\pm$ 0.018 & 42.8 $\pm$ 20.1 & 0.905 $\pm$ 0.014 & 54.4 $\pm$ 16.8 \\
\bottomrule
\end{tabular}
\end{table*}

\subsubsection{High-dimensional Scientific Optimization}

We further evaluate LABO on an 86-dimensional superconductor optimization task~\citep{superconductor}, where the goal is to maximize the superconducting critical temperature. This task provides a substantially higher-dimensional scientific optimization benchmark than those used in the main experiments. All methods are evaluated for 80 optimization iterations. As shown in Table~\ref{tab:superconductor_results}, LABO substantially outperforms all baselines in both final objective value and convergence speed. This result indicates that LABO remains effective in high-dimensional scientific optimization settings.

\begin{table}[!htbp]
\centering
\caption{Additional results on the 86D superconductor task. We report the final objective value and the number of iterations required to reach 90\% of the best value. Results are reported as mean $\pm$ standard deviation over five runs.}
\label{tab:superconductor_results}
\small
\begin{tabular*}{0.7\columnwidth}{@{\extracolsep{\fill}}lcc@{}}
\toprule
Method & Final Obj. $\uparrow$ & Iters to 90\% $\downarrow$ \\
\midrule
LABO       & \textbf{91.06 $\pm$ 8.05} & \textbf{24.4 $\pm$ 23.3} \\
LLAMBO     & 72.20 $\pm$ 19.78 & 51.2 $\pm$ 26.9 \\
BOPRO      & 80.38 $\pm$ 8.37  & 34.6 $\pm$ 31.7 \\
CAKE       & 76.57 $\pm$ 10.82 & 36.6 $\pm$ 26.6 \\
Vanilla BO & 57.96 $\pm$ 18.33 & 64.6 $\pm$ 14.3 \\
\bottomrule
\end{tabular*}
\end{table}

\subsubsection{Comparison with Direct LLM-based Thompson Sampling}

Recent LLM-based BO methods have explored using LLMs more directly in the sampling process. To further position LABO with respect to this line of work, we compare LABO with ToSFiT~\citep{ToSFiT}, which reformulates Thompson sampling as an online LLM fine-tuning process for discrete optimization. We conduct the comparison on a discrete CBD lipid nanoparticle formulation task~\citep{lnp}. As shown in Table~\ref{tab:tosfit_results}, LABO achieves both a better final objective value and faster convergence than ToSFiT and vanilla BO. This suggests that LABO's fidelity-aware integration of LLM predictions provides an effective alternative to directly using LLMs as the sampling mechanism.

\begin{table}[!htbp]
\centering
\caption{Comparison with ToSFiT on the CBD lipid nanoparticle formulation task. We report the final objective value and the number of iterations required to reach 90\% of the best value. Results are reported as mean $\pm$ standard deviation over five runs.}
\label{tab:tosfit_results}
\small
\begin{tabular*}{0.7\columnwidth}{@{\extracolsep{\fill}}lcc@{}}
\toprule
Method & Final Obj. $\uparrow$ & Iters to 90\% $\downarrow$ \\
\midrule
LABO       & \textbf{0.898 $\pm$ 0.017} & \textbf{4.4 $\pm$ 3.4} \\
ToSFiT     & 0.883 $\pm$ 0.017 & 6.0 $\pm$ 5.5 \\
Vanilla BO & 0.711 $\pm$ 0.016 & 6.4 $\pm$ 2.8 \\
\bottomrule
\end{tabular*}
\end{table}

\subsubsection{Cost-benefit Analysis}

Since LABO uses additional LLM queries to reduce the number of expensive real-fidelity evaluations, we further compare the total cost required by LABO and vanilla BO to reach the same performance level. On the COF task, we consider a conservative cost setting where each LLM query costs \$1 and each real-fidelity experiment costs \$100. We compare the total cost needed to reach the final performance achieved by vanilla BO. As shown in Table~\ref{tab:cost_benefit}, vanilla BO requires 30 rounds and 60 real-fidelity evaluations, resulting in a total cost of \$6000. In contrast, LABO reaches the same performance level within 4 rounds, using 164 LLM queries and 8 real-fidelity evaluations, resulting in a total cost of \$964. This demonstrates that LABO can substantially reduce the overall optimization cost when real-fidelity evaluations are expensive.

\begin{table}[!htbp]
\centering
\caption{Cost-benefit comparison on the COF task. We compare the total cost required to reach the final performance level achieved by vanilla BO. The cost is computed under a conservative setting where each LLM query costs \$1 and each real-fidelity experiment costs \$100.}
\label{tab:cost_benefit}
\small
\begin{tabular*}{0.7\columnwidth}{@{\extracolsep{\fill}}lccc@{}}
\toprule
Method & Rounds & LLM Queries & Total Cost (\$) \\
\midrule
Vanilla BO & 30 & 0   & 6000 \\
LABO       & 4  & 164 & 964  \\
\bottomrule
\end{tabular*}
\end{table}

\subsubsection{Alignment between LLM-fidelity and Real-fidelity Observations}

LABO models the real-fidelity function using the Kennedy--O'Hagan decomposition
\begin{equation}
    f_R(x) = \rho f_L(x) + \delta(x),
\end{equation}
where $\rho$ captures global scale alignment and $\delta(x)$ models the remaining discrepancy between LLM-fidelity and real-fidelity observations. To empirically examine whether LLM-fidelity predictions provide useful signal, we measure the correlation between LLM-fidelity and real-fidelity observations on the Flow Battery task. The LLM-fidelity predictions achieve $R^2 = 0.649$ and Pearson correlation $r = 0.806$ with real-fidelity observations, indicating a strong positive association. This supports the use of LLM predictions as informative low-fidelity signals in LABO. When the LLM signal is less informative or systematically misleading, the discrepancy term becomes dominant and the gating criterion causes LABO to rely more on real-fidelity evaluations, as also reflected by the random-fidelity ablation in Section~\ref{appendix:C}.

\end{document}